\documentclass[lettersize,journal]{IEEEtran}
\usepackage{amsmath,amsfonts}
\usepackage{algorithm}
\usepackage{array}
\usepackage[caption=false,font=normalsize,labelfont=sf,textfont=sf]{subfig}
\usepackage{textcomp}
\usepackage{stfloats}
\usepackage{url}
\usepackage{verbatim}
\usepackage{graphicx}
\usepackage{cite}
\usepackage{amsmath,amsfonts,bm}

\def\eqref#1{equation~\ref{#1}}

\def\1{\bm{1}}

\DeclareMathAlphabet{\mathsfit}{\encodingdefault}{\sfdefault}{m}{sl}
\SetMathAlphabet{\mathsfit}{bold}{\encodingdefault}{\sfdefault}{bx}{n}

\DeclareMathOperator*{\argmin}{arg\,min}

\usepackage{hyperref}
\usepackage{url}
\usepackage{booktabs}
\usepackage{multirow}
\usepackage{enumitem}

\usepackage{amsmath}
\usepackage{amsthm}
\usepackage{amsfonts}
\usepackage{color}
\usepackage[table]{xcolor}
\usepackage{algpseudocode}
\usepackage{rotating}
\usepackage{subcaption}
\usepackage{wrapfig}
\usepackage{enumitem}
\usepackage{tablefootnote}

\theoremstyle{plain} 
\newtheorem{theorem}{Theorem}[section] 

\theoremstyle{definition} 

\theoremstyle{remark} 

\begin{document}

\title{MarkovScale: Towards Optimal Sequential \\ Scaling at Inference Time}

\author{Youkang Wang, 
        ~Jian Wang, 
        ~Rubing Chen, 
        ~Tianyi Zeng, \\
        Xiao-Yong Wei,~\IEEEmembership{Senior Member,~IEEE,} 
        ~and~Qing Li,~\IEEEmembership{Fellow,~IEEE}

\thanks{Youkang Wang, Rubing Chen, Tianyi Zeng, Xiaoyong Wei and Qing Li are with the Department of Computing, The Hong Kong Polytechnic University. Jian Wang and Xiao-Yong Wei are with the College of Computer Science, Sichuan University (e-mail: \{yk2022.wang, rubing.chen, tianyi.zeng\}@connect.polyu.hk; jwanglvy@gmail.com; \{cs007.wei, qing-prof.li\}@polyu.edu.hk). Correspondence to Xiao-Yong Wei (cs007.wei@polyu.edu.hk).}}


\markboth{Journal of \LaTeX\ Class Files,~Vol.~14, No.~8, August~2021}%
{Shell \MakeLowercase{\textit{et al.}}: A Sample Article Using IEEEtran.cls for IEEE Journals}


\maketitle

\begin{abstract}

Sequential scaling is a prominent inference-time scaling paradigm, yet its performance improvements are typically modest and not well understood, largely due to the prevalence of heuristic, non-principled approaches that obscure clear optimality bounds.
To address this, we propose a principled framework that models sequential scaling as a two-state Markov process. 
This approach reveals the underlying properties of sequential scaling and yields closed-form solutions for essential aspects, such as the specific conditions under which accuracy is improved and the theoretical upper, neutral, and lower performance bounds.
Leveraging this formulation, we develop MarkovScale, a practical system that applies these optimality criteria to achieve a theoretically grounded balance between accuracy and efficiency.
Comprehensive experiments across 3 backbone LLMs, 5 benchmarks, and over 20 configurations show that MarkovScale consistently outperforms state-of-the-art parallel and sequential scaling methods, representing a significant step toward optimal and resource-efficient inference in LLMs.
The source code will be open upon acceptance at \url{https://open-upon-acceptance}.

\end{abstract}

\begin{IEEEkeywords}
Inference-time Scaling, Markov Decision Process, LLM Reasoning, Probabilistic Framework
\end{IEEEkeywords}

\section{Introduction}
\label{sec:introduction}

\IEEEPARstart{T}{he} pursuit of enhanced inference capacity in large language models (LLMs) has long been guided by the scaling law paradigm, which posits that performance improvements are achieved primarily through increased computational resources and enables training on larger datasets with more parameters. 
However, a recent paradigm shift has emerged toward Inference-time Scaling~\cite{brown2024large, wu2024scaling, snell2024scaling, zhang2025survey}, where performance gains are attained through increasing computation budgets at inference time, without modifying the base model. 
This approach offers distinct advantages: it preserves the original model architecture, reduces dependency on extensive computational infrastructure, and provides a flexible, efficient alternative to conventional training-based scaling methods.
This is particularly valuable for users with limited access to model parameters or high-performance computing resources.

Current research on inference-time scaling methods broadly distinguishes between two approaches: \textit{parallel scaling} and \textit{sequential scaling}. Parallel scaling techniques (e.g., Best-of-N~\cite{cobbe2021training}, Majority Voting~\cite{wang2023selfconsistency, nguyen2024consistent}) aggregate multiple independent model outputs to select optimal responses, while sequential methods (e.g., Chain-of-Thought~\cite{wei2022chain, li2024chain}, Multi-Round Thinking~\cite{tian2025think}) iteratively refine outputs by incorporating previous reasoning steps into subsequent iterations.
Although empirical studies indicate that sequential scaling typically achieves smaller performance gains compared to parallel or hybrid approaches, it offers distinct advantages in token efficiency~\cite{wang2025think}. 
Moreover, the sequential paradigm proves particularly valuable for interactive inference scenarios like multi-agent debates, where each agent's arguments must build coherently upon previous exchanges.
Our research focuses on advancing sequential scaling methods to match or exceed the performance levels achieved through parallel approaches, while preserving their inherent efficiency advantages.
We frame this dual objective of maximizing performance while minimizing resource consumption as the fundamental challenge of \textbf{scaling optimality}.

The difficulty in achieving optimality with current approaches stems from their reliance on heuristic methods rather than principled statistical foundations.
As a result, it is challenging to derive closed-form expressions for performance or token usage (upper and lower) bounds.
To address the gap, we demonstrate that the sequential scaling process can be modeled as a two-state Markov process, where scaling transitions between states correspond to the LLM amending an answer to either a correct or incorrect one.
By incorporating the zero-shot probability that the LLM outputs a correct answer at the initial iteration, we can formally express the probability distribution over the two states at any iteration.
This allows us to establish closed-form solutions for the bounds on optimality.

Our key contributions are as follows:
\begin{itemize}
    \item \textbf{Principled Formulation}: We develop a novel discrete-time, two-state Markov formulation that provides a probabilistic characterization of sequential scaling dynamics.
    This framework not only enhances interpretability but also uncovers fundamental properties of sequential scaling. 
    Notably, the performance convergence for a given LLM can be analytically predicted in advance, supporting model evaluation and selection prior to large-scale experimentation.
    \item \textbf{Performance Characterization}: Our analysis reveals that sequential scaling performance is determined by both the model's inherent inference capacity (captured by transition probabilities) and the question-dependent zero-shot accuracy.
    The formulation enables early-stage calculation of optimal stopping iterations that balance accuracy and efficiency, while also establishing clear criteria for when scaling improves or degrades performance. 
    \item \textbf{Practical Implementation}: We develop MarkovScale, a practical framework that applies the derived optimality criteria by terminating generation at the theoretical optimal state. 
    Comprehensive evaluation across 3 backbone LLMs, 5 representative reasoning benchmarks, and over 20 configurations demonstrates that MarkovScale outperforms state-of-the-art methods in both sequential and parallel scaling settings.
\end{itemize}
\section{Related Work}
\label{sec:related-work}

\subsection{Inference-time Scaling for LLM Reasoning}
\label{subsec:inference-time-scaling-overview}

Inference-time scaling is a promising paradigm for enhancing LLMs in complex reasoning tasks without modifying model parameters~\cite{wei2022chain}. By scaling more compute at inference time rather than retraining, it offers a lightweight and flexible alternative for improving reasoning performance.
Inference-time scaling methods can be broadly categorized into \textit{parallel scaling} and \textit{sequential scaling}~\cite{wang2025scaling}. 
Parallel methods like Self-Consistency~\cite{wang2022self} and Best-of-N (BoN) generate multiple independent outputs during a single forward pass~\cite{wang2026optscale}, aggregating results through voting or selection. 
Sequential methods refine outputs step-by-step, using previous reasoning as context for deeper inference.
Although sequential scaling often yields smaller performance gains than parallel or hybrid methods, it offers notable advantages in token efficiency~\cite{wang2025think}, motivating our focus on advancing this direction.

\subsection{Sequential Scaling}
\label{subsec:sequential-scaling-methods}

Sequential scaling enhances LLM reasoning through multi-step iterations, where each step builds upon prior outputs to enable reflection, refinement, and correction~\cite{madaan2023self}. However, naive iterative approaches often suffer from diminishing returns or even performance degradation~\cite{madaan2023self}. For example, Multi-round Thinking~\cite{tian2025think} has attempted to improve answer quality by revisiting and refining initial outputs, but gains tend to plateau quickly. This highlights a key insight: simply adding steps is insufficient, and effective sequential scaling requires strategic guidance to ensure meaningful progress.
A prominent class of sequential methods builds on LLMs' intrinsic abilities for self-correction and self-verification. ReVISE~\cite{lee2025revise} enables models to assess and revise their own reasoning paths, while SETS~\cite{chen2025sets} combines sampling, verification, and correction into a unified loop. Self-Refine~\cite{madaan2023self, chen2023teaching} adopts a recursive feedback loop, using the same model as generator, critic, and revised. 
Despite their promise, these methods can be unstable due to LLMs' limited ability to reliably detect and correct their own errors. 
Additionally, growing attention to budget-awareness has led to methods like s1~\cite{muennighoff2025s1}, which control inference cost via decoding-time interventions, and others that study reasoning under output length constraints or penalize overly long reasoning chains \cite{sun2025empirical, aggarwal2025l1}. 
Concepts such as ``overthinking''~\cite{chen2024not} and ``underthinking''~\cite{wang2025thoughts} further underscore the need for effective and efficient resource allocation in sequential scaling. As a representative approach, Atom of Thoughts~\cite{teng2025atom} has decomposed reasoning into an iterative decomposition-contraction process, dedicating computational resources to reasoning directly related to the current atomic question state.

In contrast to prior heuristic or empirically driven methods, our proposed MarkovScale framework provides a principled and theoretically grounded approach to sequential scaling. 
By modeling the iterative process as a Markov chain, we derive explicit stopping criteria and theoretical bounds on convergence to an inherent accuracy upper bound, addressing the challenges of performance instability and inefficient resource usage observed in earlier work.
\section{Markov Sequential Scaling}
In this section, we model sequential scaling as a Markov process.
This formulation not only reveals the core properties of sequential scaling, but also enables us to derive closed-form conditions under which the scaling proves beneficial.

\subsection{Sequential Scaling as a Markov Process}
\label{subsec:markov-formulation}
Sequential scaling is an iterative inference process. 
Given a question $q$, the LLM produces an answer $o_i$ at the $i^{th}$ iteration using the previous answer $o_{i-1}$ at the $(i-1)^{th}$ iteration as $o_i=f(q,o_{i-1})$.
We model the state of each iteration by indicating whether the answer is correct as $X_i\in\{C,W\}$, where $C$ means the answer $o_i$ is correct, $W$ otherwise.
This leads to a \textit{two-state Markov Chain}, since
\begin{equation}
    \mathbb{P}(X_i\vert X_{i-1},X_{i-2},\cdots,X_0)=\mathbb{P}(X_i\vert X_{i-1}).
\end{equation}
\paragraph{Modeling Assumption and Scope.}
We emphasize that the two-state formulation is a deliberate coarse-grained abstraction.
Rather than modeling reasoning quality or confidence trajectories, we focus exclusively on the correctness dynamics induced by iterative refinement.
This abstraction captures the expected correctness transition behavior, which is the quantity directly optimized by inference-time scaling.

This formulation does not assume that reasoning quality is binary, nor that internal reasoning states are Markovian.
Instead, the Markov property is imposed on the observable correctness outcome, which empirically yields sufficiently stable transition statistics for accurate prediction and bound estimation (see Section~\ref{subsec:markov-effectiveness}).

Let us define the transition probabilities as follows:
1) \( a = P(X_{i} = W \mid X_{i-1} = C) \): The probability of transitioning from a correct answer to an incorrect one.
2) \( b = P(X_{i} = C \mid X_{i-1} = W) \): The probability of transitioning from incorrect to correct.
This gives the transition matrix
\[
\mathbf{P} = \begin{bmatrix}
1 - a & a \\
b & 1 - b
\end{bmatrix}.
\]
Let $p_0(q)=\mathbb{P}(X_0=C)$ represent the zero-shot probability of the LLM outputting a correct answer in the initial iteration. The state probability distribution vector $\boldsymbol{\pi}_i$ at the $i^{th}$ iteration is given by:

\begin{equation}
\begin{aligned}
\boldsymbol{\pi}_i
&= \big[\mathbb{P}(X_i=C), \mathbb{P}(X_i=W)\big] \\
&= \boldsymbol{\pi}_0 \underbrace{\mathbf{P}\mathbf{P}\cdots\mathbf{P}}_{i~\text{times}}
 = \boldsymbol{\pi}_0 \mathbf{P}^i
 = [p_0, 1-p_0]\mathbf{P}^i
\end{aligned}
\label{eq:pi_at_step_i}
\end{equation}

\subsection{Probability of Correct State Convergence}
Let $p_i(q)=\mathbb{P}(X_i=C)$ denote the probability of obtaining a correct answer at the $i^{th}$ iteration. 
From Eq.~(\ref{eq:pi_at_step_i}), we have $p_i = \boldsymbol{\pi}_0 \mathbf{P}^i \mathbf{e}_1$ where $\mathbf{e}_1 = [1,\ 0]^\top$ is the basis vector for the correct state.
We further decompose the $\mathbf{P}$ using Matrix diagonalization as $\mathbf{P} = \mathbf{S}\mathbf{D}\mathbf{S}^{-1}$, where
\[
\mathbf{D} = \begin{bmatrix} 
1 & 0 \\ 
0 & \lambda 
\end{bmatrix}, \quad
\mathbf{S} = \begin{bmatrix} 
1 & a \\ 
1 & -b 
\end{bmatrix}, \quad
\mathbf{S}^{-1} = \frac{1}{a+b}\begin{bmatrix} 
b & a \\ 
1 & -1 
\end{bmatrix}
\]
and $\lambda = 1 - a - b$.
This gives
\begin{equation}
\mathbf{P}^i = \frac{1}{a+b} \begin{bmatrix} 
b + a\lambda^i & a(1 - \lambda^i) \\ 
b(1 - \lambda^i) & a + b\lambda^i
\end{bmatrix}.
\label{eq:Pn-matrix}
\end{equation}
The $p_i$ is then calculated as
\begin{equation}
p_i = p_0 \cdot \mathbf{P}^i_{11} + (1-p_0) \cdot \mathbf{P}^i_{21} = \frac{b}{a+b} + \frac{\lambda^i}{a+b} \left( ap_0 - b(1-p_0) \right).
\label{eq:correctness-closed-form}
\end{equation} 

\subsection{Beneficial Scaling Conditions}
\label{subsec:scale-or-not}
Previous studies have shown empirically that sequential scaling can sometimes improve performance and, in other cases, lead to degradation~\cite{wang2025think,tian2025think}. 
However, these findings remain observational and lack a theoretical explanation, especially regarding the underlying mechanisms and the precise conditions under which scaling is beneficial.
As indicated by Eq.~(\ref{eq:correctness-closed-form}), the Markov sequential scaling process is determined by the capability of backbone models and the difficulty of questions. These two major factors are jointly reflected and represented by the zero-shot probability $p_0$ and the transition probabilities $a$ and $b$.
This formulation allows us to establish closed-form criteria that specify when sequential scaling will be advantageous, based on these parameters.

To quantify the advantages of sequential scaling, we introduce a benefit function
\begin{equation}
    g_i(q)=p_i(q)+\sigma-p_0(q).
\end{equation}
This measures the improvement in answer quality by comparing iterative performance (i.e., $p_i(q)$) against the baseline (i.e., $p_0(q)$) for a given question $q$ regarding a theoretical bias term $\sigma$.
The bias term $\sigma$ upper-bounds the aggregate modeling error introduced by:
(i) verifier noise,
(ii) non-Markovian dependencies in reasoning,
(iii) stochastic decoding variability,
and (iv) prediction error margins.
Thus, $\sigma$ defines a robustness margin rather than a free hyperparameter.
$g_i(q)$ yields a value greater than zero when scaling improves answer quality, and a value less than zero upon performance degradation.

Substituting the closed-form expression for \(p_i(q)\)  from Eq.~(\ref{eq:correctness-closed-form}) into \(g(q)\), we have 

\begin{equation}
\begin{aligned}
g_i(q)
&= \left[L + \lambda^i (p_0 - L)\right] - p_0 + \sigma \\
&= (L - p_0) + \lambda^i (p_0 - L) + \sigma
= (L - p_0)(1 - \lambda^i) + \sigma
\end{aligned}
\label{eq:g-function}
\end{equation}

where $L=\frac{b}{a+b}$ and $\lambda = 1 - a - b$.
Since \(\lambda \in (0,1)\), \(\lambda^i \to 0\) when \(i \to \infty\), the asymptotic benefit
\begin{equation}
\lim_{i \to \infty} g_i(q) = (L - p_0)(1 - 0) + \sigma = L - p_0 + \sigma.
\label{eq:g-limit}
\end{equation}
The sign of \(\lim g(q)\) defines the following three scaling regimes: 
1) \textbf{Beneficial Scaling} (\(\lim g(q) > 0\)): Occurs when 
    \(p_0(q) < L + \sigma\) and the scaling monotonically improves confidence;
2) \textbf{Detrimental Scaling} (\(\lim g(q) < 0\)): Occurs when
    \(p_0(q) > L + \sigma\) and the scaling causes asymptotic degradation;
and 3) \textbf{Neutral Scaling} (\(\lim g(q) = 0\)): Occurs when \(p_0(q) = L + \sigma\) and the confidence remains no change.
These closed-form conditions enable the selective application of scaling as a criterion as follows:
\begin{theorem}[Beneficial Sequential Scaling Criterion]
\label{theorem:condition}
For a given question $q$, sequential scaling is \textbf{provably beneficial} if and only if:
\begin{equation}
p_0(q) < \frac{b}{a+b} + \sigma.
\label{eq:scaling-condition}
\end{equation}
\end{theorem}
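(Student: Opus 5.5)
The plan is to make precise what ``provably beneficial'' means and then read the criterion off the closed form in Eq.~(\ref{eq:g-function}). I take ``beneficial'' to mean that the asymptotic benefit is positive, $\lim_{i\to\infty} g_i(q) > 0$. This matches the regime classification immediately preceding the theorem. I will also remark that the same criterion governs every finite iteration $i\ge 1$, up to the treatment of $\sigma$.

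The first step is to verify the rewriting $p_i = L + \lambda^i(p_0 - L)$ with $L=\frac{b}{a+b}$. This follows from Eq.~(\ref{eq:correctness-closed-form}), since $ap_0 - b(1-p_0) = (a+b)p_0 - b = (a+b)(p_0-L)$. Substituting into $g_i(q)=p_i+\sigma-p_0$ gives Eq.~(\ref{eq:g-function}): $g_i(q)=(L-p_0)(1-\lambda^i)+\sigma$.

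The second step is to take the limit. This requires $|\lambda|<1$, that is, $0<a+b<2$. The paper assumes $\lambda\in(0,1)$, i.e.\ $0<a+b<1$; I will state this as a standing non-degeneracy hypothesis on the chain. Under it, $\lambda^i\to 0$, so Eq.~(\ref{eq:g-limit}) gives $\lim g_i = L-p_0+\sigma$. Then $\lim g_i>0 \iff p_0 < L+\sigma$, which is exactly Eq.~(\ref{eq:scaling-condition}). Both directions of the ``if and only if'' are just the equivalence of these strict inequalities. The boundary case $p_0 = L+\sigma$ gives zero benefit, the neutral regime, so it is correctly excluded.

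The main obstacle is the interpretation step, i.e.\ making the notion of ``beneficial'' precise enough that the equivalence is genuinely an iff. If one instead asks for $g_i>0$ at a fixed finite $i$, the condition becomes $(L-p_0)(1-\lambda^i) > -\sigma$. This coincides with Eq.~(\ref{eq:scaling-condition}) when $\sigma=0$, since $1-\lambda^i>0$ for $\lambda\in(0,1)$. For $\sigma\neq 0$, however, the threshold shifts with $i$ to $p_0 < L+\sigma/(1-\lambda^i)$. I would therefore anchor the theorem to the asymptotic definition. For $\sigma=0$, I would add the observation that monotonicity of $1-\lambda^i$ in $i$ makes the sign of $g_i$ constant for all $i\ge1$, so in that case the criterion is uniform over iterations as well.
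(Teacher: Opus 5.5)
Your proposal is correct and matches the paper's own argument. The paper likewise rewrites $g_i(q)=(L-p_0)(1-\lambda^i)+\sigma$, lets $i\to\infty$ under $\lambda\in(0,1)$, and defines the beneficial regime as $\lim_{i\to\infty} g_i(q)=L-p_0+\sigma>0$, which is exactly $p_0(q)<\frac{b}{a+b}+\sigma$; your further observation that at a fixed finite $i$ the threshold becomes $L+\sigma/(1-\lambda^i)$ when $\sigma\neq 0$ is also correct, and the paper leaves this caveat implicit.
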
 

\subsection{Model Selection and Performance Bound Quantification}
\label{subsec:boundary}
In addition to identifying when scaling is advantageous, practitioners commonly face two main challenges: 1) selecting the optimal backbone LLMs from available options prior to experimentation, and 2) quantifying how closely a chosen scaling strategy approaches its theoretical upper and lower performance bounds.
Our framework offers principled solutions to both of these challenges.

\paragraph{Model Selection} 
For a given candidate model, our closed-form solution (Eq.~(\ref{eq:correctness-closed-form})) allows practitioners to directly estimate its expected convergence accuracy $p_i$ (under naive sequential scaling) even before conducting any experiments, as follows:
\begin{theorem}[Expected Convergence Accuracy]
\label{theorem:upper-bound}
For any question satisfying $p_0(q) < L$ with $|\lambda| < 1$ where $\lambda = 1-a-b$, the correctness probability converges exponentially to:
\begin{equation}
\lim_{i\to\infty} p_i = L = \frac{b}{a+b}
\label{eq:steady-convergence}
\end{equation}
\end{theorem}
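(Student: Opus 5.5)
The argument is a direct consequence of the closed form for $p_i$ in Eq.~(\ref{eq:correctness-closed-form}). I will rewrite that expression as
\[
p_i = L + \lambda^i\,(p_0 - L), \qquad L=\frac{b}{a+b},\quad \lambda = 1-a-b .
\]
This uses the identity $ap_0 - b(1-p_0) = (a+b)p_0 - b = (a+b)(p_0 - L)$, which is the same rearrangement already used in Eq.~(\ref{eq:g-function}). The first step is simply to verify this algebra, noting that $a+b>0$. This positivity is implicit in the diagonalization $\mathbf{P}=\mathbf{S}\mathbf{D}\mathbf{S}^{-1}$, since $\mathbf{S}^{-1}$ carries the factor $1/(a+b)$. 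It is also implied by $|\lambda|<1$, because $\lambda<1$ forces $a+b>0$.

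The second step is the convergence itself. Subtracting $L$ gives
\[
|p_i - L| = |\lambda|^i\,|p_0 - L| .
\]
Since $|\lambda|<1$, the factor $|\lambda|^i$ decays geometrically, so $|p_i - L|\to 0$. This is exactly exponential convergence at rate $|\lambda|$, with constant $|p_0-L|\le 1$. I would state the bound explicitly as
\[
|p_i-L|\le |\lambda|^i = e^{-i\ln(1/|\lambda|)},
\]
because that is what justifies the word ``exponentially'' in the statement.

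The third step addresses the hypothesis $p_0(q)<L$. It is not needed for convergence itself, only to interpret the limit as an upper bound and a gain. When $p_0<L$ and $\lambda\in[0,1)$, the deviation $p_i - L = -\lambda^i(L-p_0)$ is nonpositive and shrinks monotonically in absolute value. Hence $p_i$ increases monotonically to $L$, and $L$ is the supremum of the achievable accuracy under naive sequential scaling. This is consistent with the beneficial regime of Theorem~\ref{theorem:condition} with $\sigma=0$.

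The only genuine subtlety, and the step I expect to be the main obstacle, is the case $\lambda\in(-1,0)$, that is $a+b>1$. There $p_i$ oscillates around $L$ rather than rising monotonically. The limit and the exponential rate still hold by the same modulus bound, so the theorem as stated remains true. However, the ``upper bound'' interpretation must then be restricted to the limiting value rather than to every iterate. I would remark on this explicitly rather than claim monotonicity in general.
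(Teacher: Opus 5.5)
Your proposal is correct and matches the paper's proof: rewrite the closed form as $p_i = L + \lambda^i(p_0-L)$ and let the transient term vanish because $|\lambda|<1$. Your additions are accurate refinements that the paper's one-line proof omits: the explicit rate $|p_i-L|\le|\lambda|^i$, the observation that $p_0<L$ is needed only for monotone increase rather than for convergence, and the oscillation caveat for $\lambda\in(-1,0)$.
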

\begin{proof}
From Eq.~(\ref{eq:correctness-closed-form}), $p_i = L + \lambda^i (p_0-L)$, the transient term $\lambda^{i}(p_0-L)$ vanishes as $i\to\infty$ since $|\lambda| = |1-a-b| < 1$. 
The remaining term $L$ constitutes the fixed point. Yet in practice, The ultimate convergence accuracy will need to account for bias $\sigma$, making convergence at $L+\sigma$.
\end{proof}
This analysis exposes the model’s inherent ability for sequential scaling, with $L$ representing the expected accuracy under unbounded scaling. 
It can serve as a metric for model selection.

\paragraph{Performance Bounds} 
The theorem can also be extended to estimate the performance bounds for a given model. 
Specifically, after applying naive sequential scaling to a dataset, we obtain:
\begin{itemize}[leftmargin=*]
    \item \textbf{Neutral Bound}: $\bar{L}=\frac{\bar{b}}{\bar{a}+\bar{b}}$, where $\bar{a}$ and $\bar{b}$ are the observed transition probabilities. 
    This represents the baseline convergence point for most scaling strategies and serves as a baseline for selecting strategies (i.e., if $\bar{L}$ is inadequate, alternative strategies should be considered).

    \item \textbf{Upper Bound}: $L^+=\frac{b^+}{a^++b^+}$, where $a^+$ and $b^+$ are the transition probabilities corresponding to Beneficial Scaling questions (as identified by Eq.~(\ref{eq:g-limit})). 
    This is the theoretical maximum accuracy attainable and can be used to assess how closely a strategy approaches optimal performance.

    \item \textbf{Lower Bound}: $L^-=\frac{b^-}{a^-+b^-}$, where $a^-$ and $b^-$ are the transition probabilities corresponding to Detrimental Scaling questions (as identified by Eq.~(\ref{eq:g-limit})). 
    This is the worst-case accuracy, representing the performance when a scaling is detrimental, thereby limiting the model’s effectiveness.
\end{itemize}
\section{Optimal Markov Scaling}

\subsection{MarkovScale with Gating Strategy}
Our Markov formulation enables the derivation of principled scaling strategies in closed form. 
With Theorem~\ref{theorem:condition}, we can easily implement a basic gating strategy that completely bypasses scaling when the initial accuracy $p_0(q)$ meets or exceeds the threshold $\frac{b}{a+b}+\sigma$.
This automatically filters out questions that fall into the Detrimental or Neutral scaling regimes.
We denote this strategy as \textbf{MarkovScale}$^t$.

\subsection{MarkovScale with Scaling Optimality}

While MarkovScale$^t$ enables selective scaling to reduce token usage, it is not necessarily optimal for questions within the Beneficial scaling regime.
To address this, we define the \textbf{scaling optimality} as an operational state where sequential scaling simultaneously satisfies two conditions: 1) the resulting accuracy surpasses a specified confidence level $\tau\in(0,1)$, and 2) token usage is minimized.
We model this as an optimization problem to determine the optimal iteration count $i^*$ as
\begin{equation}
\label{eq:first_condition_inequality}
    i^*=\argmin_{i\in[0,\infty)} p_i\geq\tau, \,\, \tau\in (0,1), \,i\in \mathbb{Z}^+.
\end{equation}

Substituting Eq.~(\ref{eq:correctness-closed-form}) into Eq.~(\ref{eq:first_condition_inequality}), we have the optimal stopping criterion:
\begin{align}
\frac{b}{a+b} + \frac{\lambda^i}{a+b}\left(ap_0 - b(1-p_0)\right) &\geq \tau \\
\implies
\lambda^i \left((a+b)p_0-b\right) &\geq (a+b)\tau - b. \label{eq:threshold-ineq}
\end{align}
For the non-trivial case where $ap_0 \neq b(1-p_0)$ and $\lambda \neq 0$, we obtain the closed-form solution:
\begin{equation}
i^* = \Bigg\lceil{\log\left(\frac{|(a+b)\tau - b|}{|ap_0 - b(1-p_0)|}\right)\log^{-1}(\lambda)}\Bigg\rceil.
\label{eq:optimal-N}
\end{equation}
For other special cases, we have (see Appendix~\ref{apd:sec:cases_eq11} for the derivation process):
1) $i^* = 1$ (Immediate satisfaction), when $\tau \leq \frac{b}{a+b}$;
2) $i^*=\frac{b}{a+b}$ (Uniform convergence), when $ap_0 = b(1-p_0)$;
and 3) $i^*=1$ (Exact convergence in 1 iteration), when $\lambda = 0$.
Scaling optimality can be achieved by comparing $i$ to those values and stopping scaling at the optimal iteration to save token consumption.

\subsubsection{\textbf{MarkovScale$^*$: Maximum a Posteriori Estimation}}
\label{subsec:estimate-parameter}
To enable optimal scaling, we need to estimate the transition probabilities $a$ and $b$, as well as the initial probability of a correct answer $p_0$.
In our approach, we consider the transition probabilities to be model-specific characteristics that reflect a model’s ability to revise its answers (i.e., its self-reflection capabilities). 
These probabilities can be readily estimated as empirical means from historical inference data, such as state transitions observed during training.
On the other hand, the initial probability $p_0$ represents the model’s zero-shot performance and is dependent on the specific question $q$ being asked. 
We employ the Maximum a Posteriori (MAP) for $p_0$, which estimates an initial value through offline learning and refines it dynamically through online optimization.
We denote this strategy as \textbf{MarkovScale}$^\mathbf{*}$.
The overall workflow is summarized in Algorithm~\ref{alg:integrated-workflow}.

\textbf{Offline Learning of Priors.} 
We employ an MLP (denoted as $\mathcal{M}_0$) on top of the frozen model to learn a mapping from a given question $q$ to the probability $p_0(q)$ that the LLM will produce a correct answer for $q$.
Our training data consists of question–probability pairs $(q, \bar{p})$, where $\bar{p}$ is obtained by aggregating the results of a specific number of generated answers for each question $q$.
Once the MLP layer is trained, the offline estimation of $p_0(q)$ can be obtained by
\begin{equation}
\hat{p}_0(q) = \mathcal{M}_{0}(\text{LLM}(q)).
\label{eq:initial-p}
\end{equation}

\begin{algorithm}[t]
\algrenewcommand\algorithmicrequire{\textbf{Input:}}
\algrenewcommand\algorithmicensure{\textbf{Output:}}
\caption{MarkovScale$^{*}$ with MAP and Optimal Stopping}
\label{alg:integrated-workflow}
\begin{algorithmic}[1]
\Require Question $q$, confidence level $\tau$, max iterations $N$, prior strength $\gamma$, bias $\sigma$.
\Ensure Final completion $o_i$.
\Statex {\color{blue} // Offline prediction}
\State Compute $\hat p_0 \leftarrow \mathcal{M}_0(\text{LLM}(q))$.
\State Compute $\hat a \leftarrow \mathcal{M}_a(\text{LLM}(q)),\;
                    \hat b \leftarrow \mathcal{M}_b(\text{LLM}(q))$.
\State Compute $\lambda \leftarrow 1 - \hat a - \hat b$.
\Statex {\color{blue} // Gating (MarkovScale$^t$) }
\If{$\hat p_0 \ge \frac{\hat b}{\hat a+\hat b} + \sigma$}
    \State \Return zero-shot output.
\EndIf
\Statex {\color{blue} // Initialize MAP priors }
\State Compute $\alpha_0 \leftarrow \hat p_0 \cdot \gamma$.
\State Compute $\beta_0 \leftarrow (1-\hat p_0)\cdot \gamma$.
\For{$i=1$ to $N$}
    \State Generate completion $o_i$.
    \State Compute verifier score $\phi(o_i) \in [0,1]$.
    \State $\alpha_i \leftarrow \alpha_{i-1} + \phi(o_i)$.
    \State $\beta_i \leftarrow \beta_{i-1} + (1-\phi(o_i))$.
    \State Compute MAP estimate $p_i^{\text{MAP}}$ via Eq.~(\ref{eq:map-estimate}).
    \Statex{ ~~~ \color{blue} // Optimal stopping check}
    \If{
    $\displaystyle
    \lambda^i\!\left((\hat a+\hat b)p_i^{\text{MAP}}-\hat b\right)
    \ge (\hat a+\hat b)\tau-\hat b$
    }
        \State \textbf{break}
    \EndIf
\EndFor
\State \Return Final completion $o_i$.
\end{algorithmic}
\end{algorithm}

Similarly, we employ additional MLP layers to learn $a$ and $b$, mapping from a question $q$ to the transition probabilities. 
To construct question–transition probability pairs, we first perform inference on the training dataset using naive multi-round sequential scaling. 
We then estimate by counting state transitions between inference steps, specifically cases where the LLM shifts from a previously correct to an incorrect answer (normalized by the number of correct states to compute $a$), and from previously incorrect to correct (normalized by the number of incorrect states to compute $b$). 
After the training data is constructed, we train two MLPs as the estimators of predicting $\hat{a}(q)$ and $\hat{b}(q)$, given by:
\begin{equation}
\hat{a}(q) = \mathcal{M}_{a}(\text{LLM}(q)), ~~ \hat{b}(q) = \mathcal{M}_{b}(\text{LLM}(q)),
\label{eq:initial-a-b}
\end{equation}
where $\mathcal{M}_a$ and $\mathcal{M}_b$ are distinct MLPs.
Although this formulation is viable, we find that per-question predictions of $\hat{a}$ and $\hat{b}$ exhibit high variance and lack stability. 
To mitigate this, we aggregate the per-question estimates by averaging across all questions within each dataset, yielding uniform dataset-level transition probabilities $a$ and $b$ used at inference time.
When using dataset-level transition probabilities, MarkovScale optimizes expected correctness over the question distribution rather than strict per-question optimality.
Under mild concentration assumptions, the dataset-level transition probabilities approximate the per-question dynamics in expectation, yielding near-optimal stopping decisions on average.

\textbf{Online Refinement of Posteriors.}
To refine the offline estimation during inference, we need to define the probability density function (PDF) for $p$.
Since the LLM's output (either correct or incorrect) for each question can be viewed as a Bernoulli trial, we model $p$ as the Beta distribution over multiple trials.
This gives $p\sim Beta(\alpha_0,\beta_0)$, where the parameters are initialized as
\begin{equation}
\alpha_0 = \hat{p}_{0}\cdot \gamma, \quad \beta_0 = (1-\hat{p}_{0})\cdot \gamma.
\label{eq:beta-prior}
\end{equation}
Here, $\gamma>0$ controls prior strength.
For the $i^{th}$ iteration, we update the posterior as:
\begin{equation}
\alpha_i = \alpha_{i-1} + \phi(o_i), \quad \beta_i = \beta_{i-1} + (1-\phi(o_i))
\label{eq:beta-update}
\end{equation}
where the $\phi(o_i)$ is a function that estimates the quality of the $i^{th}$ answer.
This function can be straightforwardly implemented using commonly available verifiers, e.g., process reward models (PRMs).
The MAP estimation is then formulated as follows:
\begin{equation}
p^{\text{MAP}}_i = 
\begin{cases}
\frac{\alpha_i}{\alpha_i + \beta_i}, & \text{if } \alpha_i \leq 1 \text{ or } \beta_i \leq 1, \\
\frac{\alpha_i - 1}{\alpha_i + \beta_i - 2}, & \text{otherwise}.
\end{cases}
\label{eq:map-estimate}
\end{equation}

\subsubsection{\textbf{MarkovScale$^{0}$: Training-free Estimation}}

While the MAP estimation method presented in Section~\ref{subsec:estimate-parameter} is effective, it relies on offline training of the initial probability predictor $\hat{p}_0$ and transition probability predictors $\hat{a}$ and $\hat{b}$. This introduces additional training overhead and model-specific dependencies. 
To mitigate these dependencies, we propose \textbf{MarkovScale$^\mathbf{0}$}, a fully training-free variant. 
Concretely, we initialize the Beta distribution parameters using heuristic priors for $p_0$. For transition probabilities $a$ and $b$, we apply a minimum number of inference rounds for all questions within each dataset, using these samples to calculate the transition probabilities on existing samples statistically. 
This serves as an approximated empirical estimation of $\hat{a}$ and $\hat{b}$.
We then apply the same online refinement strategy described in Eq.(\ref{eq:beta-update}) and Eq.(\ref{eq:map-estimate}), where $\phi(o_i)$ is computed using PRMs. 
This approach leverages the iterative nature of sequential scaling to bootstrap probability estimation from scratch, without any model training. As a result, MarkovScale$^{0}$ is fully adaptive across diverse models.

\section{Experiments}

\subsection{Experimental Setup}
\label{sec:experiment_setup}

\textbf{Benchmarks.} 
We adopt 5 challenging reasoning benchmarks: 
(1) \textbf{MATH-500}~\cite{hendrycks2measuring}, with 500 high school competition problems; 
(2) \textbf{GSM8K}~\cite{cobbe2021training}, with 8.5K grade school problems; 
(3) \textbf{AIME 2024}~\cite{maa2024aime} and (4) \textbf{AIME 2025}\footnote{\url{https://huggingface.co/datasets/math-ai/aime25}.}, each containing 30 pre-Olympiad level math problems; 
(5) \textbf{AMC 2023}\footnote{\url{https://huggingface.co/datasets/math-ai/amc23}.}, with intermediate-level problems from American math competitions. These diverse testbeds assess scaling performance across difficulty levels.

\begin{table*}[t!]
\centering
\caption{Performance comparisons of different inference-time scaling methods, where the results are reported in accuracy (\%). Each benchmark column shows results for $N \in \{8, 16, 32, 64\}$.}
\label{tab:sota_results}
\setlength{\tabcolsep}{4pt} 
\footnotesize 

\resizebox{1.0\textwidth}{!}{%
\begin{tabular}{l c c c c c c c c c c c c c c c c c c c c}
\toprule
{\textbf{Benchmark}} 
    & \multicolumn{4}{c}{\textbf{MATH-500}} 
    & \multicolumn{4}{c}{\textbf{GSM8K}} 
    & \multicolumn{4}{c}{\textbf{AIME 2024}} 
    & \multicolumn{4}{c}{\textbf{AIME 2025}} 
    & \multicolumn{4}{c}{\textbf{AMC 2023}} \\ 
\cmidrule(lr){1-1} \cmidrule(lr){2-5} \cmidrule(lr){6-9} \cmidrule(lr){10-13} \cmidrule(lr){14-17} \cmidrule(lr){18-21}
 $N$ & \textbf{8} & \textbf{16} & \textbf{32} & \textbf{64} 
 & \textbf{8} & \textbf{16} & \textbf{32} & \textbf{64} 
 & \textbf{8} & \textbf{16} & \textbf{32} & \textbf{64} 
 & \textbf{8} & \textbf{16} & \textbf{32} & \textbf{64} 
 & \textbf{8} & \textbf{16} & \textbf{32} & \textbf{64} \\

\midrule
\midrule
\multicolumn{21}{l}{\texttt{DeepSeek-R1-Distill-Llama-8B}} \\
\midrule
Base (No Scaling) & \multicolumn{4}{c}{75.2} & \multicolumn{4}{c}{71.1} & \multicolumn{4}{c}{43.3} & \multicolumn{4}{c}{13.3} & \multicolumn{4}{c}{72.5} \\
\cmidrule(lr){1-1} \cmidrule(lr){2-5} \cmidrule(lr){6-9} \cmidrule(lr){10-13} \cmidrule(lr){14-17} \cmidrule(lr){18-21} 
Budget Forcing & 75.8 & 75.8 & 75.8 & 75.8 & 86.9 & 86.9 & 86.9 & 86.9 & 23.3 & 23.3 & 23.3 & 23.3 & 23.3 & 23.3 & 23.3 & 23.3 & 65.0 & 65.0 & 65.0 & 65.0 \\
Atom of Thoughts & 33.0 & 33.0 & 33.0 & 33.0 & 50.9 & 50.9 & 50.9 & 50.9 & 16.7 & 16.7 & 16.7 & 16.7 & 6.7 & 6.7 & 6.7 & 6.7 & 35.0 & 35.0 & 35.0 & 35.0 \\
Best-of-N & 82.0 & 83.0 & 83.8 & 84.6 & 74.4 & 75.7 & 76.3 & 76.9 & 43.3 & 50.0 & 53.3 & 66.7 & 26.7 & 30.0 & 30.0 & 33.3 & 75.0 & 75.0 & 80.0 & 82.5 \\
Self-Consistency & 88.4 & 90.2 & 91.2 & 91.8 & 81.8 & 84.6 & 86.7 & 87.5 & 53.3 & 50.0 & 50.0 & 56.7 & 23.3 & 23.3 & 23.3 & 23.3 & 77.5 & 80.0 & 75.0 & 80.0 \\
MR-Thinking & 85.4 & 88.2 & 86.4 & 88.2 & \textbf{90.5} & 91.2 & 90.9 & 90.8 & 43.3 & 50.0 & 60.0 & 56.7 & 33.3 & 40.0 & 26.7 & 30.0 & \textbf{80.0} & 82.5 & 85.0 & 82.5 \\
ESC & 88.4 & 90.2 & 91.0 & 91.4 & 81.8 & 84.6 & 86.7 & 87.2 & 53.3 & 50.0 & 50.0 & 56.7 & 23.3 & 23.3 & 23.3 & 23.3 & 77.5 & 80.0 & 75.0 & 80.0 \\
ASC & 88.4 & 90.0 & 91.0 & 91.4 & 81.8 & 84.5 & 86.2 & 86.9 & 53.3 & 50.0 & 50.0 & 56.7 & 23.3 & 23.3 & 23.3 & 23.3 & 77.5 & 80.0 & 75.0 & 80.0 \\
DSC & 84.0 & 89.4 & 86.8 & 91.0 & 81.8 & 84.5 & 86.2 & 86.9 & 53.3 & 50.0 & 50.0 & 56.7 & 13.3 & 13.3 & 23.3 & 23.3 & 77.5 & 80.0 & 75.0 & 80.0 \\
\midrule
MarkovScale$^{t}$ (Ours) & \textbf{90.2} & \textbf{92.0} & \textbf{92.4} & \textbf{93.2} & 89.7 & \textbf{92.0} & \textbf{93.2} & \textbf{93.9} & \textbf{60.0} & \textbf{73.3} & \textbf{73.3} & \textbf{76.7} & \textbf{40.0} & \textbf{43.3} & \textbf{43.3} & \textbf{43.3} & \textbf{80.0} & \textbf{87.5} & \textbf{95.0} & \textbf{92.5} \\
MarkovScale$^{*}$ (Ours) & \textbf{90.2} & \textbf{92.0} & \textbf{92.4} & \textbf{93.2} & 89.7 & \textbf{92.0} & \textbf{93.2} & \textbf{93.9} & \textbf{60.0} & \textbf{73.3} & \textbf{73.3} & \textbf{76.7} & \textbf{40.0} & \textbf{43.3} & \textbf{43.3} & \textbf{43.3} & \textbf{80.0} & \textbf{87.5} & \textbf{95.0} & \textbf{92.5} \\
MarkovScale$^{0}$ (Ours) & \textbf{90.2} & \textbf{92.0} & \textbf{92.4} & \textbf{93.2} & 89.7 & \textbf{92.0} & \textbf{93.2} & \textbf{93.9} & \textbf{60.0} & \textbf{73.3} & \textbf{73.3} & \textbf{76.7} & \textbf{40.0} & \textbf{43.3} & \textbf{43.3} & \textbf{43.3} & \textbf{80.0} & \textbf{87.5} & \textbf{95.0} & \textbf{92.5} \\

\midrule
\midrule
\multicolumn{21}{l}{\texttt{DeepSeek-R1-Distill-Qwen-7B}} \\ 
\midrule
Base (No Scaling) & \multicolumn{4}{c}{89.6} & \multicolumn{4}{c}{87.0} & \multicolumn{4}{c}{53.3} & \multicolumn{4}{c}{36.7} & \multicolumn{4}{c}{82.5} \\
\cmidrule(lr){1-1} \cmidrule(lr){2-5} \cmidrule(lr){6-9} \cmidrule(lr){10-13} \cmidrule(lr){14-17} \cmidrule(lr){18-21}
Budget Forcing & 85.8 & 85.8 & 85.8 & 85.8 & 90.7 & 90.7 & 90.7 & 90.7 & 36.7 & 36.7 & 36.7 & 36.7 & 30.0 & 30.0 & 30.0 & 30.0 & 80.0 & 80.0 & 80.0 & 80.0 \\
Atom of Thoughts & 46.8 & 46.8 & 46.8 & 46.8 & 77.7 & 77.7 & 77.7 & 77.7 & 36.7 & 36.7 & 36.7 & 36.7 & 33.3 & 33.3 & 33.3 & 33.3 & 70.0 & 70.0 & 70.0 & 70.0 \\
Best-of-N & \textbf{94.8} & 93.8 & 94.4 & 94.0 & 92.4 & 93.5 & 93.1 & 94.0 & \textbf{70.0} & \textbf{76.7} & 70.0 & \textbf{80.0} & 43.3 & 46.7 & 50.0 & \textbf{53.3} & \textbf{95.0} & \textbf{95.0} & \textbf{95.0} & \textbf{95.0} \\
Self-Consistency & 93.4 & 93.4 & 93.2 & 93.4 & 90.1 & 90.8 & 90.9 & 91.2 & 60.0 & 66.7 & 63.3 & 60.0 & 40.0 & 36.7 & 40.0 & 40.0 & 85.0 & 87.5 & 90.0 & 92.5 \\
MR-Thinking & 91.2 & 91.6 & 91.4 & 92.0 & 88.4 & 88.3 & 88.4 & 88.2 & 56.7 & 53.3 & 63.3 & 70.0 & 40.0 & 43.3 & 33.3 & 40.0 & 87.5 & 87.5 & 90.0 & 90.0 \\
ESC & 93.4 & 93.4 & 93.4 & 93.4 & 90.1 & 90.8 & 90.9 & 91.1 & 60.0 & 66.7 & 63.3 & 60.0 & 40.0 & 36.7 & 40.0 & 40.0 & 85.0 & 87.5 & 90.0 & 92.5 \\
ASC & 93.4 & 93.4 & 93.2 & 93.4 & 90.1 & 90.8 & 90.9 & 91.1 & 60.0 & 66.7 & 63.3 & 60.0 & 40.0 & 36.7 & 40.0 & 40.0 & 85.0 & 87.5 & 90.0 & 92.5 \\
DSC & 93.2 & 93.2 & 93.0 & 93.2 & 90.1 & 90.7 & 90.9 & 91.1 & 60.0 & 66.7 & 63.3 & 60.0 & 40.0 & 36.7 & 40.0 & 40.0 & 82.5 & 87.5 & 87.5 & 90.0 \\
\midrule
MarkovScale$^{t}$ (Ours) & 94.6 & \textbf{95.0} & \textbf{95.0} & \textbf{95.6} & \textbf{93.3} & \textbf{94.1} & \textbf{94.3} & \textbf{94.6} & \textbf{70.0} & 70.0 & \textbf{73.3} & 73.3 & \textbf{53.3} & \textbf{56.7} & 50.0 & 46.7 & 92.5 & \textbf{95.0} & \textbf{95.0} & \textbf{95.0} \\
MarkovScale$^{*}$ (Ours) & 94.6 & \textbf{95.0} & \textbf{95.0} & \textbf{95.6} & \textbf{93.3} & \textbf{94.1} & \textbf{94.3} & \textbf{94.6} & \textbf{70.0} & 70.0 & \textbf{73.3} & 73.3 & \textbf{53.3} & \textbf{56.7} & 50.0 & 46.7 & 92.5 & \textbf{95.0} & \textbf{95.0} & \textbf{95.0} \\
MarkovScale$^{0}$ (Ours) & 94.6 & \textbf{95.0} & \textbf{95.0} & \textbf{95.6} & \textbf{93.3} & \textbf{94.1} & \textbf{94.3} & \textbf{94.6} & \textbf{70.0} & 70.0 & \textbf{73.3} & 73.3 & \textbf{53.3} & \textbf{56.7} & \textbf{53.3} & 46.7 & 92.5 & \textbf{95.0} & \textbf{95.0} & \textbf{95.0} \\

\midrule
\midrule
\multicolumn{21}{l}{\texttt{QwQ-32B}} \\ 
\midrule
Base (No Scaling) & \multicolumn{4}{c}{94.2} & \multicolumn{4}{c}{94.8} & \multicolumn{4}{c}{70.0} & \multicolumn{4}{c}{56.7} & \multicolumn{4}{c}{85.0} \\
\cmidrule(lr){1-1} \cmidrule(lr){2-5} \cmidrule(lr){6-9} \cmidrule(lr){10-13} \cmidrule(lr){14-17} \cmidrule(lr){18-21}
Budget Forcing & 92.2 & 92.2 & 92.2 & 92.2 & 87.6 & 87.6 & 87.6 & 87.6 & \textbf{73.3} & 73.3 & 73.3 & 73.3 & \textbf{70.0} & 70.0 & 70.0 & 70.0 & 67.5 & 67.5 & 67.5 & 67.5 \\
Atom of Thoughts & 56.0 & 56.0 & 56.0 & 56.0 & 89.0 & 89.0 & 89.0 & 89.0 & 43.3 & 43.3 & 43.3 & 43.3 & 20.0  & 20.0 & 20.0 & 20.0 & 80.0 & 80.0 & 80.0 & 80.0 \\
Best-of-N & 94.6 & 94.6 & 95.2 & 94.8 & \textbf{95.8} & 95.5 & 95.5 & 95.8 & \textbf{73.3} & 73.3 & 73.3 & 76.7 & 66.7 & 63.3 & 63.3 & 66.7 & \textbf{95.0} & \textbf{97.5} & \textbf{97.5} & \textbf{97.5} \\
Self-Consistency & 95.2 & 95.2 & 95.0 & 95.4 & 95.6 & 95.5 & 95.7 & 95.8 & 70.0 & 66.7 & 70.0 & 70.0 & 66.7 & 60.0 & 63.3 & 63.3 & 92.5 & 92.5 & 92.5 & 92.5 \\
MR-Thinking & 94.6 & 94.4 & 94.4 & 93.8 & 95.3 & \textbf{95.6} & 95.4 & 94.8 & 70.0 & 70.0 & 63.3 & 73.3 & 53.3 & 56.7 & 53.3 & 56.7 & 87.5 & 80.0 & 75.0 & 80.0 \\
ESC & 95.2 & 95.2 & 95.0 & 95.4 & 95.6 & 95.5 & 95.7 & 95.8 & 70.0 & 66.7 & 70.0 & 70.0 & 66.7 & 60.0 & 60.0 & 60.0 & 92.5 & 92.5 & 92.5 & 92.5 \\
ASC & 95.2 & 95.2 & 95.2 & 95.4 & 95.6 & 95.5 & \textbf{95.8} & \textbf{95.9} & 70.0 & 66.7 & 70.0 & 70.0 & 66.7 & 60.0 & 63.3 & 63.3 & 92.5 & 92.5 & 92.5 & 92.5 \\
DSC & 95.0 & 94.4 & 94.6 & 94.6 & 95.2 & 95.1 & 95.1 & 95.3 & 70.0 & 66.7 & 70.0 & 70.0 & 66.7 & 60.0 & 63.3 & 63.3 & 92.5 & 92.5 & 92.5 & 92.5 \\
\midrule
MarkovScale$^{t}$ (Ours) & \textbf{96.6} & 96.6 & 96.4 & \textbf{96.6} & 95.5 & \textbf{95.6} & 95.6 & 95.6 & \textbf{73.3} & \textbf{76.7} & \textbf{80.0} & \textbf{83.3} & \textbf{70.0} & \textbf{73.3} & \textbf{76.7} & \textbf{76.7} & 87.5 & 87.5 & 87.5 & 87.5 \\
MarkovScale$^{*}$ (Ours) & \textbf{96.6} & 96.6 & \textbf{96.6} & \textbf{96.6} & 95.4 & \textbf{95.6} & 95.6 & 95.6 & \textbf{73.3} & \textbf{76.7} & \textbf{80.0} & \textbf{83.3} & \textbf{70.0} & \textbf{73.3} & \textbf{76.7} & \textbf{76.7} & 87.5 & 87.5 & 87.5 & 87.5 \\
MarkovScale$^{0}$ (Ours) & \textbf{96.6} & \textbf{96.8} & \textbf{96.6} & \textbf{96.6} & 95.5 & \textbf{95.6} & 95.6 & 95.6 & \textbf{73.3} & \textbf{76.7} & \textbf{80.0} & \textbf{83.3} & \textbf{70.0} & \textbf{73.3} & \textbf{76.7} & \textbf{76.7} & 87.5 & 87.5 & 87.5 & 90.0 \\

\bottomrule
\end{tabular}
}
\end{table*}

\textbf{Backbone Models.} 
We utilize widely adopted open-source LLMs as backbone models. Specifically, we use \texttt{DeepSeek-R1-Distill-Qwen-7B}~\cite{guo2025deepseek} and \texttt{DeepSeek-R1-Distill-Llama-8B}~\cite{guo2025deepseek} as distilled reasoning models with different architectures. To assess performance in high-capacity, large-scale settings, we also include \texttt{QwQ-32B}~\cite{yang2025qwen3}.

\textbf{Baseline Methods.} 
We evaluate our MarkovScale against the following competitive inference-time scaling methods:
(1) \textit{Budget-Forcing}~\cite{muennighoff2025s1}, a simple budget-constrained sequential scaling strategy;
(2) \textit{MR-Thinking}~\cite{tian2025think}, which iteratively rethinks its previous answer round by round;
(3) \textit{Atom of Thoughts (AoT)}\footnote{Experiments are conducted using the official implementation at \url{https://github.com/qixucen/atom}. We observed that the current codebase contains an issue in the AoT depth configuration, resulting in the suboptimal performance reported in Table~\ref{tab:sota_results}.}~\cite{teng2025atom}, which adopts an iterative decomposition-contraction process for scaling; 
(4) \textit{Best-of-N (BoN)}~\cite{cobbe2021training}, which generates multiple outputs in parallel and selects the best answer based on external verifier scoring;
(5) \textit{Self-Consistency (SC)}~\cite{wang2023selfconsistency}, which relies on majority voting over multiple sampled generations;
(6) \textit{Adaptive Self-Consistency (ASC)}~\cite{aggarwal2023let}: which stops sampling when existing generations establish a clear majority judged by a lightweight stopping criterion;
(7) \textit{Early-stopping Self-Consistency (ESC)}~\cite{li2024escape}, which enhances SC with an early stopping strategy;
and (8) \textit{Difficulty-Adaptive Self-Consistency (DSC)}~\cite{wang2024make}, which leverages difficulty information of questions to allocate budgets adaptively.

\textbf{Implementation Details.} 
To implement MarkovScale$^{*}$, we generate 60 answers per MATH-500 training question to estimate zero-shot correctness probabilities. They serve as training data for the MLP predictor, which is trained for 30 epochs with a learning rate of 1e-5.
We employ Qwen2.5-Math-PRM-7B as the verifier.
In MarkovScale$^{0}$, we initialize the Beta distribution parameters using empirical priors: $\alpha_0 = 9$ and $\beta_0 = 1$.
We define $N$ as the maximum number of sequential iterations or the number of parallel samples allowed during generation, and vary $N$ in \{8, 16, 32, 64\}. To ensure a fair comparison, we use a fixed decoding temperature of 0.7 and a top-$p$ value of 0.95 for all methods.
All experiments are conducted using 4$\times$H20 GPUs. We report both \textit{accuracy} and \textit{token count} to evaluate the effectiveness and efficiency of inference-time scaling, respectively.

\begin{figure*}[t!]
    \centering
    \includegraphics[width=0.95\textwidth]{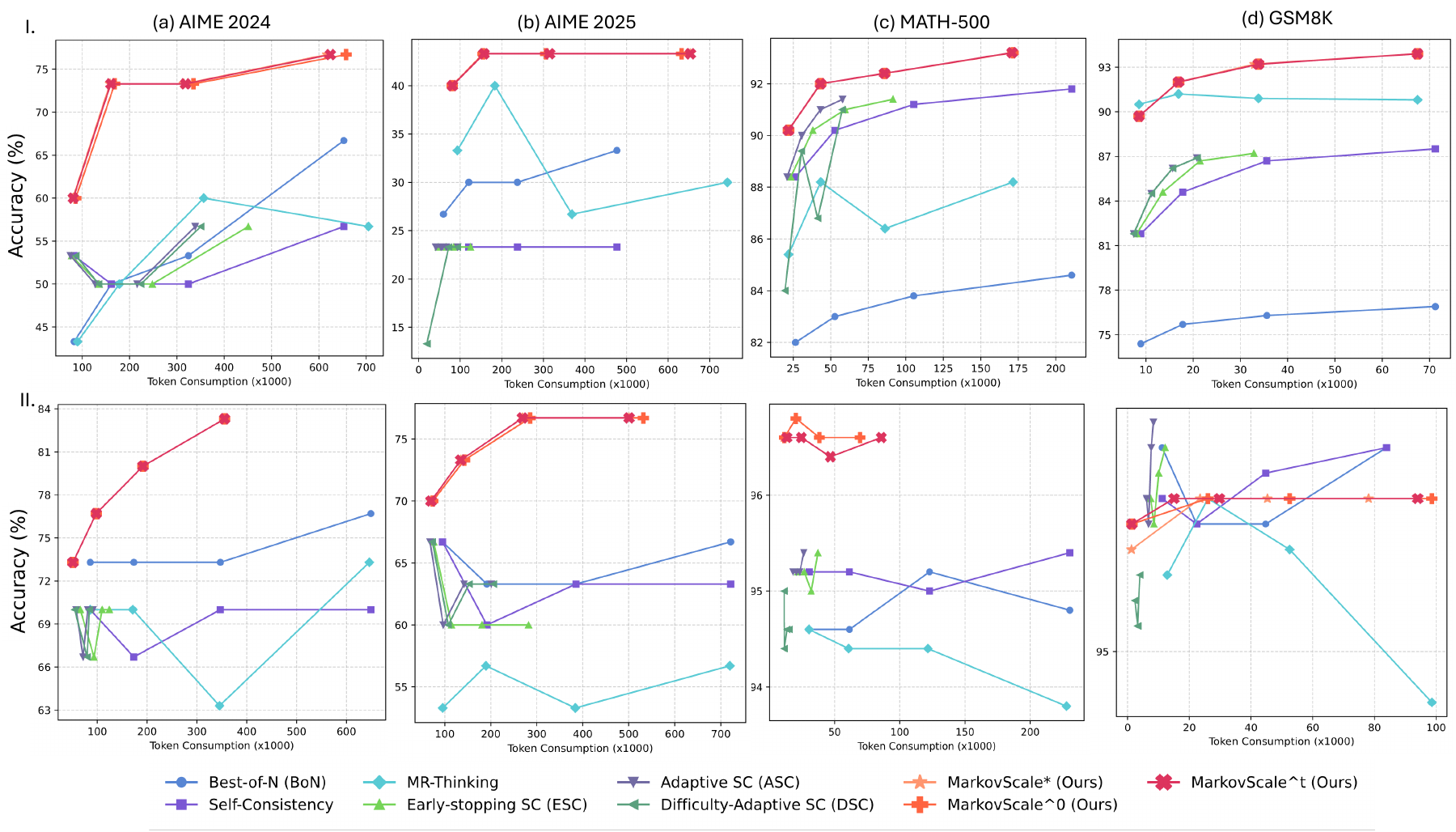}
    \caption{Scaling performance (accuracy vs. average token consumption) of different methods, using (\romannumeral1) DeepSeek-R1-Distill-Llama-8B (top row) and (\romannumeral2) QwQ-32B (bottom row) as backbones.}
    \label{fig:accuracy-token-comparison}
\end{figure*}

\subsection{Performance Comparison And Accuracy-Token Consumption Tradeoff}

Table~\ref{tab:sota_results} reports quantitative results across various backbone and benchmark configurations, while Figure~\ref{fig:accuracy-token-comparison} shows scaling comparisons of different methods in terms of accuracy relative to token consumption.
Together, they highlight the effectiveness of MarkovScale in optimizing the tradeoff between performance and efficiency, as evidenced by the following key findings:

\textbf{Superior Accuracy Across Backbone Models.}
As shown in Table~\ref{tab:sota_results}, MarkovScale consistently outperforms baselines by a significant margin across all tested backbones.
For example, MarkovScale improves accuracy by an average of 19.7\% on DeepSeek-R1-Distill-Llama-8B. Even on the stronger QwQ-32B model, the average accuracy gain is 7.7\%.
These results highlight MarkovScale's generalization ability for inference-time scaling, regardless of the underlying model's architecture.
Moreover, our results show that MarkovScale demonstrates robust performance across R1-Distill-Qwen-2.5-7B, R1-Distill-Llama-3.1-8B, and QwQ-32B. This consistency (from 7B to 32B) validates that the proposed method generalizes effectively across different model scales.

\textbf{Distinct Benefits of MarkovScale Variants.} 
Table~\ref{tab:sota_results} shows that all three variants of MarkovScale perform well.
The simplest of these to implement is MarkovScale$^t$. While MarkovScale$^0$ and MarkovScale$^{*}$ have comparable accuracy, the training-free MarkovScale$^0$ typically uses more tokens. 
This is likely because MarkovScale$^0$ requires the LLM to meet a higher internal confidence threshold, often needing more iterations. 
This highlights a practical tradeoff between the effort of training a lightweight predictor and the efficiency of saving inference tokens.

\textbf{Pareto-Optimal Tradeoff Between Accuracy and Cost.}
MarkovScale consistently identifies an optimal stopping point in the scaling process, beyond which additional iterations lead to diminishing returns in accuracy. This behavior is especially beneficial for larger models like QwQ-32B, where unnecessary scaling can be costly. 
As shown in Figure~\ref{fig:accuracy-token-comparison}, MarkovScale achieves higher accuracy at significantly lower token budgets compared to baselines. For instance, on MATH-500, it attains 97\% accuracy while saving about 70\% of tokens relative to the MR-Thinking baseline. Similar trends are observed on AIME 2024 and AIME 2025, respectively. 
These results demonstrate that MarkovScale achieves significant Pareto optimality in the accuracy-cost tradeoff.

\textbf{Robust Scaling with Varying Budgets.}
MarkovScale shows remarkable robustness to scaling budgets. As reported in Table~\ref{tab:sota_results}, it consistently maintains state-of-the-art performance across most configurations when varying the scaling budget $N$ within $\{8, 16, 32, 64\}$.
Most importantly, it achieves these gains while also reducing token usage by 5\% to 70\%, depending on the specific configuration. 
This highlights MarkovScale's potential to efficiently adapt to inference constraints.

\subsection{Ablation Studies on Scaling Optimality Contributors}
Table~\ref{tab:ablations} presents an ablation study of the MarkovScale on the DeepSeek-R1-Distill-Qwen-7B with $N=8$ across different benchmarks. 
We find Two key findings emerge:
\textbf{1) Effectiveness of Optimal Stopping}:
Compared to the variant without the optimal stopping (w/o OS) strategy defined in Eq.~(\ref{eq:optimal-N}), the full MarkovScale$^*$ significantly reduces token usage while maintaining the same accuracy. 
For example, on MATH-500, the full MarkovScale$^*$ uses 17,114 tokens, a 17.6\% reduction from the w/o OS variant's 20,123 tokens. Similar trends are also observed on other benchmarks.
These results confirm that the proposed optimal stopping strategy prevents unnecessary scaling steps, leading to substantial efficiency gains without sacrificing performance. 
\textbf{2) Role of Scaling Gating (SG)}:
The variant of MarkovScale$^*$ without both optimal stopping and scaling gating (w/o 
OS+SG) performs much inferior in terms of accuracy and token usage. 
For instance, on GSM8K, the full MarkovScale$^*$ achieves 93.9\% accuracy with 4,238 tokens, while the variant of w/o OS+SG uses 4,792 tokens to obtain 88.4\% accuracy, a clear drop in effectiveness and efficiency. 
These results support Theorem~\ref{theorem:condition}, validating that the scaling gating is an essential principled strategy for optimal scaling.

\begin{table*}[t!]
\centering
\caption{Ablation study results of MarkovScale (Deepseek-R1-Distill-Qwen7B as the backbone with $N=8$) across different benchmarks. ``Acc.'' denotes accuracy (\%), ``Toks.'' indicates token count.}
\label{tab:ablations}
\resizebox{0.98\textwidth}{!}{%
\begin{tabular}{l c c c c c c c c c c}
\toprule
\multirow{2}{*}{\textbf{Baseline Method}} & \multicolumn{2}{c}{\textbf{MATH-500}} & \multicolumn{2}{c}{\textbf{GSM8K}} & \multicolumn{2}{c}{\textbf{AIME 2024}} & \multicolumn{2}{c}{\textbf{AIME 2025}} & \multicolumn{2}{c}{\textbf{AMC 2023}} \\ 
\cmidrule{2-11}
 & \textbf{Acc.}  & \textbf{Toks.} ($\downarrow$) & \textbf{Acc.} & \textbf{Toks.} ($\downarrow$) & \textbf{Acc.}  & \textbf{Toks.} ($\downarrow$)  & \textbf{Acc.} & \textbf{Toks.} ($\downarrow$) & \textbf{Acc.}  & \textbf{Toks.} ($\downarrow$) \\
\midrule
MarkovScale$^*$ (full) & 95.0 & \textbf{17114} & 93.9& 4238& 63.3& 41192& 50.0 & \textbf{51991} & 92.5&
\textbf{16220} \\
~~ \textit{w/o} Scaling Gating (SG) & 95.0 & 17515& 93.9& \textbf{2801} & 63.3& \textbf{40151} & 50.0 & 52938& 92.5&
16235 \\
~~ \textit{w/o} Optimal Stopping (OS) & 95.0 & 20123& 93.9& 5056& 63.3& 41906& 50.0 & 54772& 92.5&17962 \\
~~ \textit{w/o} OS + SG & 91.2 &  21396& 88.4 &  4792& 56.7 &  78432& 40.0 & 86568&  87.5 &  36780 \\
\bottomrule
\end{tabular}
}
\end{table*}

\begin{figure*}[t!]
    \centering
    \begin{minipage}[t]{0.48\linewidth}
        \centering
        \includegraphics[width=\linewidth]{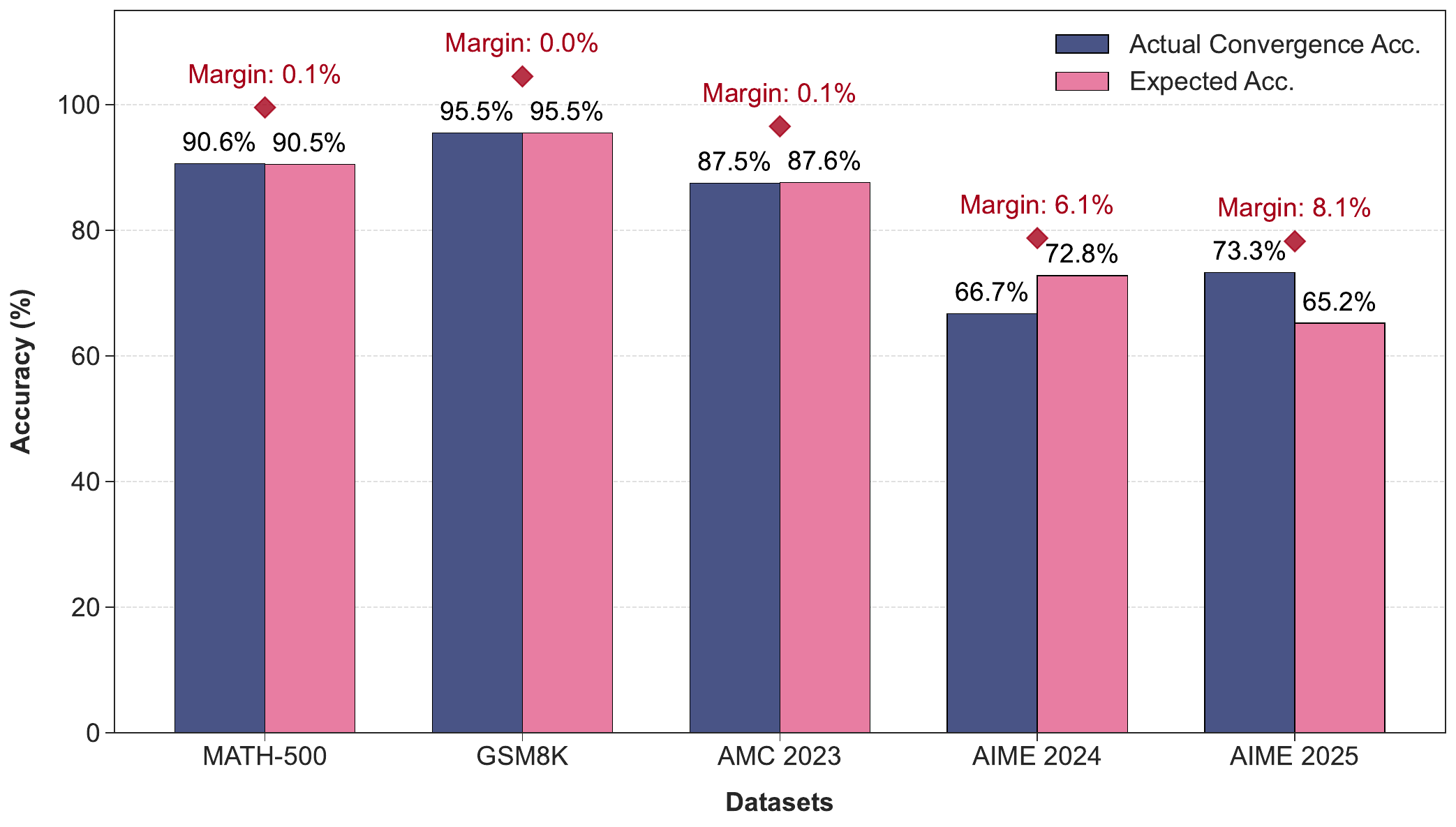}
        \caption{Comparison between expected accuracies and actual convergence results.}
        \label{fig:convergence_accuracy_prediction}
    \end{minipage}
    \hfill
    \begin{minipage}[t]{0.48\linewidth}
        \centering
        \includegraphics[width=\linewidth]{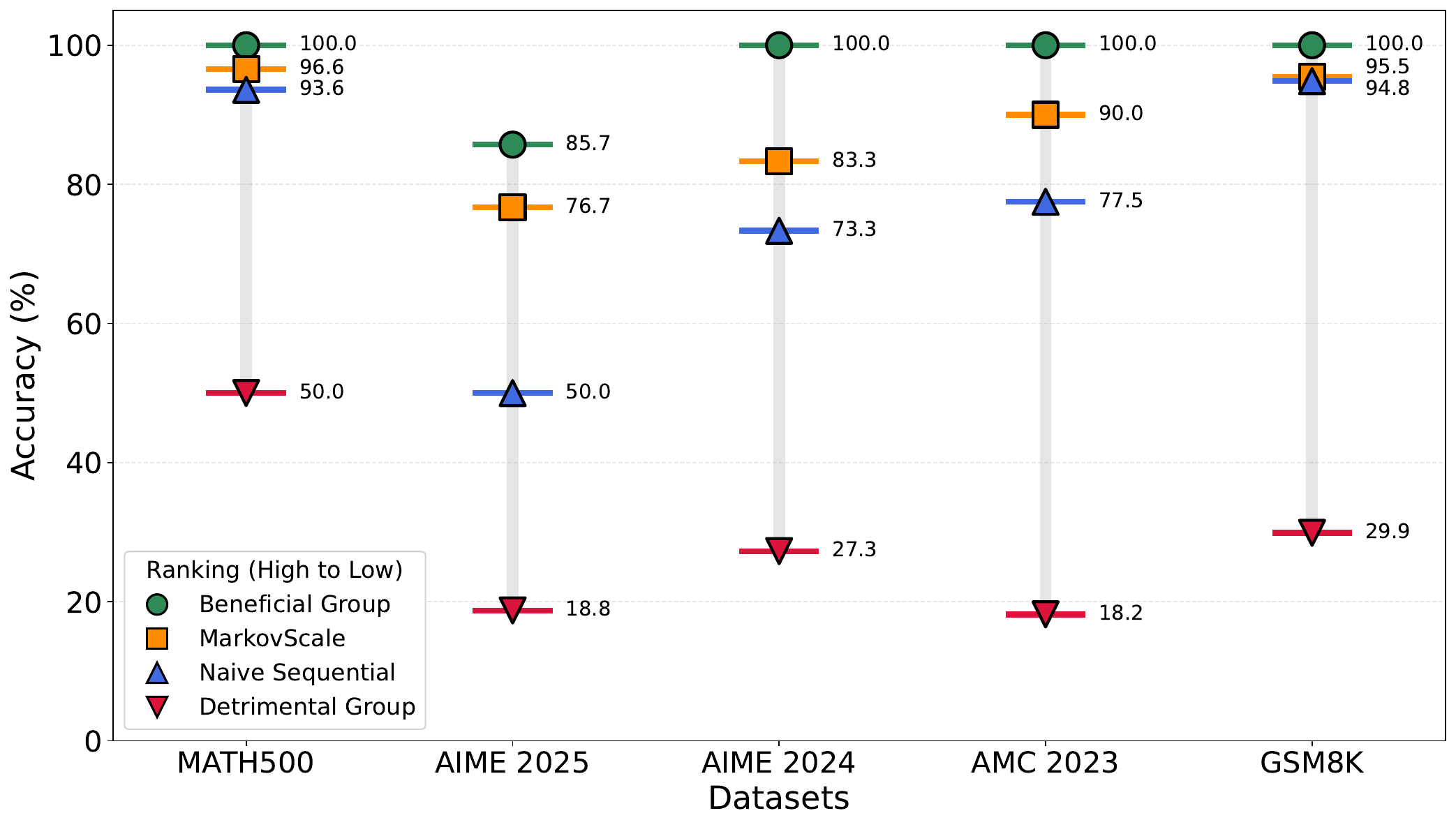}
        \caption{Performance of MarkovScale with respect to the theoretical bounds.}
        \label{fig:beneficial_detrimental_average}
    \end{minipage}
\end{figure*}

\subsection{Sensitivity Study of Confidence Level $\tau$}
We conducted a sensitivity study of MarkovScale$^{*}$ by varying the hyperparameter $\tau$ within the range $[0.90, 0.99]$. 
As shown in Table~\ref{tab:markovscale_tau_ablation}, using the R1-Distill-Qwen-7B model with $N=64$ and $\gamma=10$, empirically optimal values were observed at $\tau=0.93$ on AIME 2024, $\tau=0.92$ on AIME 2025, $\tau=0.93$ on AMC 2023, $\tau=0.97$ on GSM8K, and $\tau=0.98$ on MATH-500.
The results indicate that the optimal choice of $\tau$ is correlated with benchmark difficulty: harder tasks (e.g., AIME 2024, AIME 2025, AMC 2023) typically favor values in the neighborhood of $0.88$–$0.93$, whereas easier datasets (e.g., GSM8K, MATH-500) perform best with higher settings around $0.95$–$0.98$.

\begin{table*}[t!]
\centering
\caption{Experimental results of MarkovScale$^{*}$ with different settings of the threshold $\tau$, using the Deepseek-R1-Distill-Qwen-7B as the backbone model across different benchmarks.}
\label{tab:markovscale_tau_ablation}
\resizebox{0.93\textwidth}{!}{%
\begin{tabular}{l c c c c c c c c c c}
\toprule
\multirow{2}{*}{\textbf{Threshold $\tau$}} & \multicolumn{2}{c}{\textbf{MATH-500}} & \multicolumn{2}{c}{\textbf{GSM8K}} & \multicolumn{2}{c}{\textbf{AIME 2024}} & \multicolumn{2}{c}{\textbf{AIME 2025}} & \multicolumn{2}{c}{\textbf{AMC 2023}} \\
\cmidrule{2-3} \cmidrule{4-5} \cmidrule{6-7} \cmidrule{8-9} \cmidrule{10-11}
 & \textbf{Acc.} & \textbf{Toks.} ($\downarrow$) & \textbf{Acc.} & \textbf{Toks.} ($\downarrow$) & \textbf{Acc.} & \textbf{Toks.} ($\downarrow$) & \textbf{Acc.} & \textbf{Toks.} ($\downarrow$) & \textbf{Acc.} & \textbf{Toks.} ($\downarrow$) \\
\midrule
0.90 & 89.6 & 2,939 & 87.0 & 1,027 & 66.7 & 320,742 & 43.3 & 468,315 & 85.0 & 34,110 \\
0.91 & 90.8 & 3,491 & 88.3 & 2,150 & 70.0 & 381,891 & 43.3 & 468,315 & 90.0 & 60,334 \\
0.92 & 92.6 & 5,718 & 89.5 & 3,637 & 70.0 & 405,844 & \textbf{46.7} & \textbf{541,722} & 92.5 & 117,790 \\
0.93 & 94.2 & 40,591 & 92.7 & 6,250 & \textbf{73.3} & \textbf{481,951} & 46.7 & 618,765 & \textbf{95.0} & \textbf{118,222} \\
0.94 & 95.0 & 55,209 & 93.9 & 9,634 & 73.3 & 537,113 & 46.7 & 619,701 & 95.0 & 158,662 \\
0.95 & 95.0 & 81,217 & 94.4 & 14,537 & 73.3 & 600,713 & 46.7 & 623,294 & 95.0 & 192,809 \\
0.96 & 95.2 & 104,107 & 94.5 & 18,554 & 73.3 & 602,617 & 46.7 & 659,270 & 95.0 & 224,076 \\
0.97 & 95.4 & 134,130 & \textbf{94.6} & \textbf{23,573} & 73.3 & 606,782 & 46.7 & 675,129 & 95.0 & 251,998 \\
0.98 & \textbf{95.6} & \textbf{159,118} & 94.6 & 30,601 & 73.3 & 622,478 & 46.7 & 684,446 & 95.0 & 280,571 \\
0.99 & 95.6 & 168,226 & 94.6 & 37,971 & 73.3 & 625,850 & 46.7 & 684,446 & 95.0 & 286,638 \\
\bottomrule
\end{tabular}
}
\end{table*}

\subsection{Effectiveness of the Markov Formulation and Beyond}
\label{subsec:markov-effectiveness}
To assess the validity and effectiveness of the Markov formulation, we compare its theoretically derived bounds with observed empirical results to address the following questions:

\textbf{1) Can the expected accuracy be predicted?}
Theorem~\ref{theorem:upper-bound} presents theoretical accuracy limits for naive sequential scaling methods, dependent on the backbone model’s capacity as reflected in the transition probabilities. 
Figure~\ref{fig:convergence_accuracy_prediction} compares these theoretical accuracies with actual convergence results using the Deepseek-R1-Distill-Qwen7B across multiple benchmarks.
The theoretical predictions closely match the empirical accuracies, with an average mean absolute error (MAE) of just $2.88\%$. 
Remarkably, for three out of five datasets, the prediction error is negligible ($\leq$0.1\%), highlighting the robustness of our Markov formulation.
Since these theoretical accuracies can be computed in advance, they are helpful to achieve expected performance and can inform decisions on whether large-scale experimentation is warranted beforehand.

\textbf{2) Can the theoretical bounds be used to assess performance?}
In Section~\ref{subsec:boundary}, we introduce methods for determining theoretical neutral, upper, and lower accuracy bounds. 
To evaluate MarkovScale’s relative performance, we compare its actual accuracy against these bounds.
As illustrated in Figure~\ref{fig:beneficial_detrimental_average}, MarkovScale consistently approaches the theoretical upper bound, maintaining a clear advantage over both the neutral and lower bounds. 
This not only demonstrates the effectiveness of MarkovScale, but also provides a practical framework for assessing future sequential scaling methods by offering clear insight into how close they come to optimal performance, and whether they surpass average (neutral bound) or worst-case (lower bound) scenarios.

\section{Conclusion}

This paper introduces MarkovScale, a principled framework for optimal inference-time scaling in large language models (LLMs). By modeling the sequential scaling process as a discrete-time Markov chain, our work establishes the first theoretical foundation and delivers practical algorithms with provable optimality properties and theoretical bounds.
MarkovScale addresses a key limitation of existing methods, which rely on heuristic token expansion and suffer from inefficient or unpredictable tradeoffs between accuracy and computational cost. 
By uncovering the underlying structure of sequential scaling, MarkovScale also opens new avenues for analyzing self-correction dynamics, such as the ``overthinking'' phenomenon~\cite{su2025between} in LLMs and semantic reasoning~\cite{chongwahngo2007trecvid,chongwahngo2008trecvid} in video search, offering valuable insights for complex reasoning tasks.
Extending this formulation to multi-state or non-homogeneous Markov chains, such as confidence-calibrated or verifier-conditioned states, is a promising direction for future work.


\appendix
\section*{Detailed Solution Derivations}
\label{apd:sec:cases_eq11}

Here we provide a detailed solution derivation process of Eq.~(\ref{eq:threshold-ineq}): 
\begin{equation}
   ((a+b)p_0-b) \cdot \lambda^i \geq (a+b)\tau - b, \nonumber
\end{equation}
to find the minimum integer solution $i$ that satisfies the inequality. 
We follow the previous notations, where $\lambda = 1-a-b \in (0,1)$ and $L = \frac{b}{a+b}$.

\paragraph{Case structure and interpretation.}
Before enumerating the cases, we note that the inequality is governed entirely by the relative signs of the two linear terms $(a+b)p_0 - b$ and $(a+b)\tau - b$, with $\lambda \in (0,1)$. Accordingly, the analysis decomposes into four mutually exclusive regimes: (i) degenerate cases where either side is zero, (ii) aligned-sign cases where logarithmic inversion yields a finite solution, (iii) aligned-sign cases with no feasible integer solution, and (iv) opposite-sign cases that imply either immediate satisfaction or impossibility. Each case corresponds to a qualitatively different scaling behavior.

\textbf{Case 1: The left side of the equation equals zero (i.e., $(a+b)p_0 - b = 0$).}

This corresponds to the stationary accuracy of the Markov chain, where scaling neither improves nor degrades correctness in expectation.

We have
\begin{align*}
(a+b)p_0 - b = 0 \Rightarrow p_0 = \frac{b}{a+b} = L.
\end{align*}

The inequality becomes:
\begin{align*}
0 \cdot \lambda^i \geq (a+b)\tau - b \Rightarrow
0 \geq (a+b)\tau - b \Rightarrow
\frac{b}{a+b} \geq \tau \Rightarrow
L \geq \tau.
\end{align*}

Therefore, we have the solution for Case 1:
\begin{itemize}
\item If $p_0 = L \geq \tau$, any $i$ satisfies the inequality (therefore, minimum $i=1$).
\item If $p_0 = L < \tau$, no $i$ satisfies the inequality.
\end{itemize}

\textbf{Case 2: The right side of the equation equals zero (i.e., $(a+b)\tau - b = 0$).}

This case represents a confidence threshold equal to the Markov fixed point, so the stopping condition depends solely on whether the initial accuracy lies above or below equilibrium.

We have:
\begin{align*}
(a+b)\tau - b = 0 \Rightarrow \tau = \frac{b}{a+b} = L.
\end{align*}

The inequality becomes:
\begin{align*}
[(a+b)p_0 - b]\lambda^i &\geq 0.
\end{align*}

Therefore, we have the solution for Case 2 as
\begin{itemize}
\item When $(a+b)p_0 - b \geq 0$ (i.e., $p_0 > L$):
\begin{align*}
\lambda^i &\geq 0.
\end{align*}
This holds for all $i$ since $\lambda > 0$ (minimum $i=1$).

\item When $(a+b)p_0 - b < 0$ (i.e., $p_0 < L$):
\begin{align*}
\lambda^i &\leq 0.
\end{align*}
Never holds since $\lambda^i > 0$ for all $i$ (no solution).
\end{itemize}

\textbf{Case 3: Signs of LHS and RHS terms align. Now, logarithm is applicable (i.e., $\frac{(a+b)\tau - b}{(a+b)p_0 - b} > 0$).}

In this regime, both sides of the inequality share the same sign, meaning that scaling can monotonically move the accuracy toward (or away from) the confidence threshold, allowing a logarithmic solution for the optimal stopping time. \\

\textbf{Case 3.1: $(a+b)p_0 - b > 0$ and $(a+b)\tau - b > 0$.}  
The inequality becomes:
\begin{align*}
\lambda^i &\geq \frac{(a+b)\tau - b}{(a+b)p_0 - b} = \frac{\tau - L}{p_0 - L}.
\end{align*}

Taking logarithms ($\lambda \in (0,1) \Rightarrow \lambda^i > 0$ and $\frac{\tau-L}{p_0-L} > 0$) and moving $\log \lambda$ to the right-hand side ($\log \lambda < 0$):
\begin{align*}
i &\leq \frac{\log\left(\frac{\tau - L}{p_0 - L}\right)}{\log \lambda}.
\end{align*}

There are two subcases:
\begin{itemize}
\item When $p_0 > \tau > L$:
\begin{align*}
\frac{\tau - L}{p_0 - L} \in (0,1) &\Rightarrow \log\left(\frac{\tau - L}{p_0 - L}\right) < 0, \\
\text{since } \log \lambda < 0 &\Rightarrow i \leq \text{positive number}.
\end{align*}
Thus all $0 < i \leq \frac{\log\left(\frac{\tau - L}{p_0 - L}\right)}{\log \lambda}$ satisfy the inequality (minimum $i=1$).

\item When $\tau > p_0 > L$:
\begin{align*}
\frac{\tau - L}{p_0 - L} > 1 &\Rightarrow \log\left(\frac{\tau - L}{p_0 - L}\right) > 0, \\
\text{since } \log \lambda < 0 &\Rightarrow i \leq \text{negative number}.
\end{align*}
No solution exists.
\end{itemize}

Therefore, when the initial accuracy already exceeds the stationary point $L$, scaling is only beneficial if $p_0 > \tau$; otherwise, no amount of scaling can reach the desired confidence.

\textbf{Case 3.2: $(a+b)p_0 - b < 0$ and $(a+b)\tau - b < 0$.}  
The inequality direction reverses:
\begin{align*}
\lambda^i &\leq \frac{(a+b)\tau - b}{(a+b)p_0 - b} = \frac{\tau - L}{p_0 - L}.
\end{align*}

There are two subcases:
\begin{itemize}
\item When $p_0 < \tau < L$:
\begin{align*}
\frac{\tau - L}{p_0 - L} \in (0,1).
\end{align*}
Similarly, taking logarithms, we get:
\begin{align*}
i &\geq \frac{\log\left(\frac{\tau - L}{p_0 - L}\right)}{\log \lambda} > 0.
\end{align*}
The minimum $i$ is the ceiling $\Bigg\lceil \frac{\log\left(\frac{\tau - L}{p_0 - L}\right)}{\log \lambda} \Bigg\rceil$.

\item When $\tau < p_0 < L$:
\begin{align*}
\frac{\tau - L}{p_0 - L} > 1.
\end{align*}
Since $\lambda^i < 1$ for all $i \geq 1$, no solution exists.
\end{itemize}

This is the only regime in which a non-trivial finite optimal iteration count exists, yielding the closed-form ceiling expression in Eq.~(\ref{eq:optimal-N}).

\textbf{Case 4: Signs of LHS and RHS differ. Logarithm is not applicable (i.e., $\frac{(a+b)\tau - b}{(a+b)p_0 - b} < 0$).}

When the two sides have opposite signs, the inequality either holds trivially for all iterations or fails for all iterations, eliminating the need for logarithmic analysis. \\

\textbf{Case 4.1: $(a+b)p_0 - b < 0$ and $(a+b)\tau - b > 0$.}  
The inequality becomes:
\begin{align*}
\lambda^i &\leq \frac{(a+b)\tau - b}{(a+b)p_0 - b} = \frac{\tau - L}{p_0 - L} < 0.
\end{align*}
No solution exists since $\lambda^i > 0$ for all $i$.

This corresponds to a detrimental scaling regime, where scaling strictly decreases correctness relative to the desired threshold.

\textbf{Case 4.2: $(a+b)p_0 - b > 0$ and $(a+b)\tau - b < 0$.}  
The inequality becomes:
\begin{align*}
\lambda^i &\geq \frac{(a+b)\tau - b}{(a+b)p_0 - b} = \frac{\tau - L}{p_0 - L} < 0.
\end{align*}
This holds for all $i \geq 1$ since $\lambda^i > 0 > \frac{\tau-L}{p_0-L}$ (minimum $i=1$).

This case implies immediate satisfaction of the stopping condition, motivating the constant-time stopping rule used in MarkovScale$^t$.

\paragraph{Summary.}
In summary, the inequality admits three qualitatively distinct outcomes: (i) immediate satisfaction ($i^*=1$), (ii) no feasible solution (scaling should be bypassed), or (iii) a unique finite optimal iteration count given by Eq.~(\ref{eq:optimal-N}). This case-wise analysis fully characterizes the optimal stopping behavior of MarkovScale across all parameter regimes.
\section*{Comparison of Different MarkovScale variants}

Table \ref{tab:markov_variants_results} provides a side-by-side comparison of the three MarkovScale variants in terms of accuracy, token usage, and practical deployment considerations. 
Our findings and insights are as follows:
1) MarkovScale\textsuperscript{t}, which applies the simplest first-round gating rule, performs surprisingly close to the more sophisticated variants despite its minimal design, making it an appealing choice when implementation simplicity is a priority.
2) MarkovScale\textsuperscript{0}, which is entirely training-free, delivers consistently strong accuracy and occasionally even surpasses MarkovScale\textsuperscript{*}, though it typically incurs higher inference-time token costs.
3) In contrast, MarkovScale\textsuperscript{*} leverages trained predictors to reduce inference-time computation, and consequently tends to be more token-efficient across most settings. 
This highlights a practical tradeoff: MarkovScale\textsuperscript{0} avoids any training overhead but spends more compute at inference, whereas MarkovScale\textsuperscript{*} invests compute upfront during predictor training to achieve more efficient inference-time usage.

\begin{table*}[t!]
\centering
\caption{Comparison of MarkovScale variants on representative reasoning benchmarks (with $N \in \{8, 16, 32, 64\}$). ``Acc.'' denotes accuracy (\%), ``Toks.'' indicate the total number of inference tokens.}
\label{tab:markov_variants_results}
\resizebox{0.98\textwidth}{!}{%
\begin{tabular}{l c c c c c c c c c c}
\toprule
\multirow{2}{*}{\textbf{Baseline Method}} & \multicolumn{2}{c}{\textbf{MATH-500}} & \multicolumn{2}{c}{\textbf{GSM8K}} & \multicolumn{2}{c}{\textbf{AIME 2024}} & \multicolumn{2}{c}{\textbf{AIME 2025}} & \multicolumn{2}{c}{\textbf{AMC 2023}} \\ 
\cmidrule{2-11}
 & \textbf{Acc.}  & \textbf{Toks.} ($\downarrow$) & \textbf{Acc.} & \textbf{Toks.} ($\downarrow$) & \textbf{Acc.}  & \textbf{Toks.} ($\downarrow$)  & \textbf{Acc.} & \textbf{Toks.} ($\downarrow$) & \textbf{Acc.}  & \textbf{Toks.} ($\downarrow$) \\

\midrule
\multicolumn{11}{l}{\texttt{Deepseek-R1-Distill-Llama-8B}} \\ 
\midrule
MR-Thinking ($N=8$)& 85.4& 21868& \textbf{90.5}& \textbf{8552}& 43.3& 89923& 33.3& 93307& \textbf{80.0}& 34500\\

MarkovScale$^{t}$  ($N=8$)& \textbf{90.2}& 21809& 89.7& \textbf{8552}& \textbf{60.0}& 80956& \textbf{40.0}& 81296& \textbf{80.0}& \textbf{33837}\\

MarkovScale$^{*}$ ($N=8$)& \textbf{90.2}& \textbf{21800}& 89.7& \textbf{8552}& \textbf{60.0}& \textbf{80325}& \textbf{40.0}& 81296& \textbf{80.0}& 33908\\

MarkovScale$^{0}$  ($N=8$)& \textbf{90.2}& 21868& 89.7& \textbf{8552}& \textbf{60.0}& 86120& \textbf{40.0}& \textbf{80492}& \textbf{80.0}& 34500\\

\midrule
MR-Thinking ($N=16$)& 88.2& 43338& 91.2& 16916& 50.0& 178390& 40.0& 183332.0& 82.5& 67480.0\\

MarkovScale$^{t}$  ($N=16$)& \textbf{92.0}& 43203& \textbf{92.0}& 16916& \textbf{73.3}& 160233& \textbf{43.3}& 157090& \textbf{87.5}& 66049\\

MarkovScale$^{*}$ ($N=16$)& \textbf{92.0}& \textbf{43178}& \textbf{92.0}& \textbf{16863}& \textbf{73.3}& \textbf{158395}& \textbf{43.3}& 156790& \textbf{87.5}& \textbf{65538}\\

MarkovScale$^{0}$  ($N=16$)& \textbf{92.0}& 43338& \textbf{92.0}& 16916& \textbf{73.3}& 168548& \textbf{43.3}& \textbf{155195}& \textbf{87.5}& 67480\\

\midrule
MR-Thinking ($N=32$)& 86.4& 86127& 90.9& 33819& 60.0& 356745& 26.7& 368740& 85.0& 133628\\

MarkovScale$^{t}$  ($N=32$)& \textbf{92.4}& 85804& \textbf{93.2}& 33819& \textbf{73.3}& 318040& \textbf{43.3}& 315051& \textbf{95.0}& 130376\\

MarkovScale$^{*}$ ($N=32$)& \textbf{92.4}& \textbf{85747}& \textbf{93.2}& \textbf{32988}& \textbf{73.3}& \textbf{314751}& \textbf{43.3}& 311657& \textbf{95.0}& \textbf{126306}\\

MarkovScale$^{0}$  ($N=32$)& \textbf{92.4}& 86127& \textbf{93.2}& 33819& \textbf{73.3}& 335142& \textbf{43.3}& \textbf{306124}& \textbf{95.0}& 133628\\

\midrule
MR-Thinking ($N=64$)& 88.2& 171565& 90.8& 67478& 56.7& 704771& 30.0& 742767& 82.5& 266692\\

MarkovScale$^{t}$  ($N=64$)& \textbf{93.2}& 170930& \textbf{93.9}& 67478& \textbf{76.7}& 624233& \textbf{43.3}& 653900& \textbf{92.5}& 260360\\

MarkovScale$^{*}$ ($N=64$)& \textbf{93.2}& \textbf{170687}& \textbf{93.9}& \textbf{67370}& \textbf{76.7}& \textbf{617252}& \textbf{43.3}& 653900& \textbf{92.5}& \textbf{244158}\\

MarkovScale$^{0}$  ($N=64$)& \textbf{93.2}& 171565& \textbf{93.9}& 67478& \textbf{76.7}& 657353& \textbf{43.3}& \textbf{632523}& \textbf{92.5}& 266692\\

\midrule
\multicolumn{11}{l}{\texttt{Deepseek-R1-Distill-Qwen-7B}} \\ 
\midrule
MR-Thinking ($N=8$)& 91.2& 21396& 88.4& 4792.0& 56.7& 78432& 40.0& 86568& 87.5& 36780\\

MarkovScale$^{t}$  ($N=8$)& \textbf{94.6}& 20995& \textbf{93.3}& 4785& \textbf{70.0}& 35972& \textbf{53.3}& 81231& \textbf{92.5}& 20377\\

MarkovScale$^{*}$ ($N=8$)& \textbf{94.6}& \textbf{20817}& \textbf{93.3}& \textbf{4770}& \textbf{70.0}& \textbf{34021}& \textbf{53.3}& \textbf{80524}& \textbf{92.5}& \textbf{18617}\\

MarkovScale$^{0}$  ($N=8$)& \textbf{94.6}& 21396& \textbf{93.3}& 4789& \textbf{70.0}& 41428& \textbf{53.3}& 81281& \textbf{92.5}& 36780\\

\midrule
MR-Thinking ($N=16$)& 91.6& 42422& 88.3& 9615& 53.3& 157507& 43.3& 171853& 87.5& 72877\\

MarkovScale$^{t}$  ($N=16$)& \textbf{95.0}& 42262& \textbf{94.1}& 9600 & \textbf{70.0}& 64369 & \textbf{56.7}& 159080 & \textbf{95.0}& 38055 \\

MarkovScale$^{*}$ ($N=16$)& \textbf{95.0}& \textbf{42108}& \textbf{94.1}& \textbf{8573 }& \textbf{70.0}& \textbf{60624 }& \textbf{56.7}& \textbf{156835 }& \textbf{95.0}& \textbf{33421 }\\

MarkovScale$^{0}$  ($N=16$)& \textbf{95.0}& 42422& \textbf{94.1}& 9609 & \textbf{70.0}& 76211 & \textbf{56.7}& 157591 & \textbf{95.0}& 72877 \\

\midrule
MR-Thinking ($N=32$)& 91.4& 84234& 88.4& 19135 & 63.3& 313463 & 33.3& 341517 & 90.0& 143178 \\

MarkovScale$^{t}$  ($N=32$)& \textbf{95.0}& 82522& \textbf{94.3}& 19104 & \textbf{73.3}& 254489 & 50.0& \textbf{235833 }& \textbf{95.0}& 71659 \\

MarkovScale$^{*}$ ($N=32$)& \textbf{95.0}& \textbf{57993}& \textbf{94.3}& \textbf{14016 }& \textbf{73.3}& 245441 & 50.0& \textbf{235833 }& \textbf{95.0}& \textbf{61585 }\\

MarkovScale$^{0}$  ($N=32$)& \textbf{95.0}& 84234& \textbf{94.3}& 19122 & \textbf{73.3}& \textbf{244472 }& \textbf{53.3}& 236835 & \textbf{95.0}& 143178 \\

\midrule
MR-Thinking ($N=64$)& 92.0& 168331& 88.2& 38305 & 70.0& 625850 & 40.0& 684446 & 90.0& 286639 \\

MarkovScale$^{t}$  ($N=64$)& \textbf{95.6}& 167678& \textbf{94.6}& 38235 & \textbf{73.3}& 501492 & \textbf{46.7}& 541722 & \textbf{95.0}& 139123 \\

MarkovScale$^{*}$ ($N=64$)& \textbf{95.6}& \textbf{159118 }& \textbf{94.6}& \textbf{23574 }& \textbf{73.3}& 481952 & \textbf{46.7}& 541722 & \textbf{95.0}& \textbf{118222 }\\

MarkovScale$^{0}$  ($N=64$)& \textbf{95.6}& 168331 & \textbf{94.6}& 38277 & \textbf{73.3}& \textbf{480287 }& \textbf{46.7}& \textbf{469318 }& \textbf{95.0}& 286639 \\

\midrule

\midrule
\multicolumn{11}{l}{\texttt{QwQ-32B}} \\ 
\midrule
MR-Thinking ($N=8$)& 94.6& 30378 & 95.3& 12832 & 70.0& 85864 & 53.3& 95536 & \textbf{87.5}& 47288 \\

MarkovScale$^{t}$   ($N=8$)& \textbf{96.6}& 13415 & \textbf{95.5}& 1358 & \textbf{73.3}& \textbf{51087 }& \textbf{70.0}& \textbf{70323 }& \textbf{87.5}& 13445 \\

MarkovScale$^{*}$   ($N=8$)& \textbf{96.6}& \textbf{11113 }& 95.4& \textbf{1221 }& \textbf{73.3}& \textbf{51087 }& \textbf{70.0}& \textbf{70323 }& \textbf{87.5}& 13445 \\

MarkovScale$^{0}$   ($N=8$)& \textbf{96.6}& 11477 & \textbf{95.5}& 1385 & \textbf{73.3}& \textbf{51087 }& \textbf{70.0}& 74022 & \textbf{87.5}& \textbf{8888 }\\

\midrule
MR-Thinking ($N=16$)& 94.4& 60801 & \textbf{95.6}& 26043 & 70.0& 171256 & 56.7& 189500 & 80.0& 95434 \\

MarkovScale$^{t}$   ($N=16$)& 96.6& 24539 & \textbf{95.6}& \textbf{15016 }& \textbf{76.7}& \textbf{97863 }& \textbf{73.3}& \textbf{134737 }& \textbf{87.5}& 22713 \\

MarkovScale$^{*}$   ($N=16$)& 96.6& \textbf{12122 }& \textbf{95.6}& 23488 & \textbf{76.7}& \textbf{97863 }& \textbf{73.3}& \textbf{134737 }& \textbf{87.5}& 22713 \\

MarkovScale$^{0}$   ($N=16$)& \textbf{96.8}& 20366 & \textbf{95.6}& 26043 & \textbf{76.7}& \textbf{97863 }& \textbf{73.3}& 142663 & \textbf{87.5}& \textbf{12834 }\\

\midrule
MR-Thinking ($N=32$)& 94.4& 121565 & 95.4& 52509 & 63.3& 345355 & 53.3& 383810 & 75.0& 191141 \\

MarkovScale$^{t}$   ($N=32$)& 96.4& 46806 & \textbf{95.6}& \textbf{29705 }& \textbf{80.0}& \textbf{191659 }& \textbf{76.7}& \textbf{268905 }& \textbf{87.5}& 40965 \\

MarkovScale$^{*}$   ($N=32$)& \textbf{96.6}& \textbf{12122 }& \textbf{95.6}& 45293 & \textbf{80.0}& \textbf{191659 }& \textbf{76.7}& \textbf{268905 }& \textbf{87.5}& 40965 \\

MarkovScale$^{0}$   ($N=32$)& \textbf{96.6}& 38298 & \textbf{95.6}& 52509 & \textbf{80.0}& \textbf{191659 }& \textbf{76.7}& 285287 & \textbf{87.5}& \textbf{20354 }\\

\midrule
MR-Thinking ($N=64$)& 93.8& 227838 & 94.8& 98607 & 73.3& 646200 & 56.7& 719654 & 80.0& 360068 \\

MarkovScale$^{t}$   ($N=64$)& \textbf{96.6}& 85797 & \textbf{95.6}& 94012 & \textbf{83.3}& \textbf{355525 }& \textbf{76.7}& \textbf{500679 }& 87.5& \textbf{73133 }\\

MarkovScale$^{*}$   ($N=64$)& \textbf{96.6}& \textbf{12122 }& \textbf{95.6}& \textbf{78088 }& \textbf{83.3}& \textbf{355525 }& \textbf{76.7}& \textbf{500679 }& 87.5& \textbf{73133 }\\

MarkovScale$^{0}$   ($N=64$)& \textbf{96.6}& 69552 & \textbf{95.6}& 98607 & \textbf{83.3}& \textbf{355525 }& \textbf{76.7}& 531857 & \textbf{90.0}& 87667 \\

\bottomrule
\end{tabular}
}
\end{table*}
\section*{Hyperparameter Analyses}

MarkovScale mainly utilizes two hyperparameters: 1) stopping threshold $\tau$, which dictates the required accuracy probability before the inference process stops scaling, and 2) prior strength factor $\gamma$ (only used in MarkovScale$^{*}$), which controls the weight of the offline prior $\hat{p}_0$ during the online MAP update of the Beta distribution parameters.

Table~\ref{tab:markovscale_gamma_ablation} reports scaling performance regarding the varying settings of the prior strength factor $\gamma$ together with confidence level $\tau$ varying across $[0.80, 0.99]$ (with 0.01 the interval), using Deepseek-R1-Distill-Qwen-7B as the backbone model on the AIME 2024 benchmark. 
We adopt $\gamma=10$, a representative mid-range setting. Empirical results in the table support this choice: low prior strengths (e.g., $\gamma=1$ or $\gamma=4$) lead to markedly poorer performance, with peak accuracies of $63.33\%$ for $\gamma=1$ and $60.00\%$ for $\gamma=4$, compared to $73.33\%$ for $\gamma=10$. 
The sensitivity analysis also shows that tuning $\gamma$ upward from very small values generally yields consistent accuracy gains, while overly weak priors incur substantial accuracy degradation across nearly all thresholds. 
Together, these trends validate $\gamma=10$ as a robust and well-calibrated setting.

\begin{table*}[t!]
\centering
\caption{Experimental results of different settings of the prior strength factor $\gamma$. Here, MarkovScale$^{*}$ employs Deepseek-R1-Distill-Qwen-7B on the AIME 2024 benchmark.}
\label{tab:markovscale_gamma_ablation}
\resizebox{0.95\textwidth}{!}{%
\begin{tabular}{l c c c c c c c c c c}
\toprule
\multirow{2}{*}{\textbf{Threshold $\tau$}} & \multicolumn{2}{c}{\textbf{$\gamma = 1$}} & \multicolumn{2}{c}{\textbf{$\gamma = 4$}} & \multicolumn{2}{c}{\textbf{$\gamma = 7$}} & \multicolumn{2}{c}{\textbf{$\gamma = 10$}} & \multicolumn{2}{c}{\textbf{$\gamma = 20$}} \\
\cmidrule{2-3} \cmidrule{4-5} \cmidrule{6-7} \cmidrule{8-9} \cmidrule{10-11}
 & \textbf{Acc.} & \textbf{Toks.} ($\downarrow$) & \textbf{Acc.} & \textbf{Toks.} ($\downarrow$) & \textbf{Acc.} & \textbf{Toks.} ($\downarrow$) & \textbf{Acc.} & \textbf{Toks.} ($\downarrow$) & \textbf{Acc.} & \textbf{Toks.} ($\downarrow$) \\
\midrule
0.80 & 60.00 & 47,360 & 53.33 & 9,884 & 53.33 & 9,884 & 53.33 & 9,884 & 53.33 & 9,884 \\
0.81 & 60.00 & 47,360 & 53.33 & 9,884 & 53.33 & 9,884 & 53.33 & 9,884 & 53.33 & 9,884 \\
0.82 & 60.00 & 47,360 & 53.33 & 9,884 & 53.33 & 9,884 & 53.33 & 9,884 & 53.33 & 9,884 \\
0.83 & 63.33 & 68,998 & 53.33 & 9,884 & 53.33 & 9,884 & 53.33 & 9,884 & 53.33 & 9,884 \\
0.84 & 63.33 & 69,498 & 53.33 & 9,884 & 53.33 & 9,884 & 53.33 & 9,884 & 53.33 & 9,884 \\
0.85 & 63.33 & 70,511 & 53.33 & 9,884 & 53.33 & 9,884 & 53.33 & 9,884 & 53.33 & 9,884 \\
0.86 & 63.33 & 97,876 & 53.33 & 9,884 & 53.33 & 9,884 & 53.33 & 9,884 & 60.00 & 91,438 \\
0.87 & 63.33 & 98,403 & 53.33 & 9,884 & 53.33 & 9,884 & 53.33 & 9,884 & 66.67 & 256,804 \\
0.88 & 63.33 & 98,403 & 53.33 & 9,884 & 53.33 & 9,884 & 53.33 & 37,725 & 73.33 & 423,459 \\
0.89 & 63.33 & 98,547 & 53.33 & 9,884 & 53.33 & 9,884 & 66.67 & 224,497 & 73.33 & 483,868 \\
0.90 & 63.33 & 99,068 & 53.33 & 9,884 & 53.33 & 37,725 & 66.67 & 320,742 & 73.33 & 563,873 \\
0.91 & 63.33 & 131,530 & 53.33 & 9,884 & 66.67 & 200,576 & 70.00 & 381,891 & 73.33 & 583,646 \\
0.92 & 63.33 & 131,964 & 53.33 & 9,884 & 66.67 & 257,174 & 70.00 & 405,844 & 73.33 & 595,669 \\
0.93 & 63.33 & 162,589 & 53.33 & 9,884 & 70.00 & 353,703 & 73.33 & 481,951 & 73.33 & 604,932 \\
0.94 & 63.33 & 162,589 & 53.33 & 9,884 & 70.00 & 353,703 & 73.33 & 537,113 & 73.33 & 607,838 \\
0.95 & 63.33 & 162,589 & 53.33 & 9,884 & 70.00 & 381,891 & 73.33 & 600,713 & 73.33 & 615,435 \\
0.96 & 63.33 & 225,103 & 53.33 & 69,710 & 73.33 & 463,866 & 73.33 & 602,617 & 73.33 & 622,944 \\
0.97 & 63.33 & 225,103 & 60.00 & 117,653 & 73.33 & 501,492 & 73.33 & 606,782 & 73.33 & 625,850 \\
0.98 & 66.67 & 250,354 & 63.33 & 185,195 & 73.33 & 547,403 & 73.33 & 622,478 & 73.33 & 625,850 \\
0.99 & 66.67 & 250,354 & 66.67 & 269,499 & 73.33 & 618,059 & 73.33 & 625,850 & 73.33 & 625,850 \\
\bottomrule
\end{tabular}
}
\end{table*}

\bibliographystyle{IEEEtran}
\bibliography{reference}


 




\vfill

\end{document}